\documentclass{article} 

\usepackage{color}  %
\RequirePackage[dvipsnames,usenames]{xcolor}
\usepackage{wrapfig}
\usepackage{iclr2019_conference,times}
\iclrfinaltrue


\usepackage[hidelinks]{hyperref}
\usepackage{url}
\usepackage[outercaption]{sidecap}

\usepackage{tabularx}

\usepackage[T1]{fontenc}
\usepackage[utf8]{inputenc}
\setcounter{secnumdepth}{3}
\usepackage{verbatim}
\usepackage{amsmath}
\usepackage{amsthm}
\usepackage{amssymb} 
\usepackage{graphicx}
\usepackage{appendix}

\usepackage{enumitem}

\makeatletter

\DeclareMathOperator*{\argmax}{\arg\max}

\usepackage{tikz}
\newcommand{\classtexttt}[1] {\textsf{\footnotesize #1}}

\usepackage[capitalize]{cleveref}

\newcommand{\issueone}{IB Curve cannot be explored using the IB Lagrangian}
\newcommand{\issueoneDIB}{dIB Curve cannot be explored using the dIB Lagrangian}
\newcommand{\issuetwo}{All points on IB curve have ``uninteresting'' solutions}
\newcommand{\issuetwoDIB}{All points on dIB curve have ``uninteresting'' solutions}
\newcommand{\issuethree}{No trade-off among different neural network layers}

\newcommand{\predfunc}{ a_\theta }

\newcommand{\codeurl}{{\footnotesize
\ificlrfinal
\url{https://github.com/artemyk/ibcurve}
\else
\url{anonymized}
\fi}}

\newtheorem{theorem}{Theorem}

\global\long\def\Tcopy{T_{\text{copy}}}

\newcommand{\sY}{\mathcal{Y}}
\newcommand{\sX}{\mathcal{X}}
\newcommand{\cardY}{| \sY |}

\newcommand{\T}[1] {T_{#1}}


\makeatother

\raggedbottom
\begin{document}

\title{Caveats for information bottleneck\\in deterministic scenarios}

\author{Artemy Kolchinsky \& Brendan D. Tracey\thanks{Dept of Aeronautics \& Astronautics, Massachusetts Institute of Technology, Cambridge, MA 02139, USA}\\
Santa Fe Institute \\
Santa Fe, NM 87501, USA \\
\texttt{\{artemyk,tracey.brendan\}@gmail.com} \\
\And
Steven Van Kuyk \\
School of Engineering and Computer Science \\
Victoria University of Wellington, New Zealand \\
\texttt{steven.jvk@gmail.com}
}



\maketitle

\begin{abstract}

Information bottleneck (IB) is a method for extracting information from one random variable $X$ that is relevant for predicting another random variable $Y$. To do so, IB identifies an intermediate ``bottleneck'' variable $T$ that has low mutual information $I(X;T)$ and high mutual information  $I(Y;T)$. The \emph{IB curve} characterizes the set of bottleneck variables that achieve maximal $I(Y;T)$ for a given $I(X;T)$, and is typically explored by maximizing the \emph{IB Lagrangian}, $I(Y;T) - \beta I(X;T)$. In some cases, $Y$ is a deterministic function of $X$, including many  classification problems in supervised learning   
where the output class $Y$ is a deterministic function of the input $X$. 
We demonstrate three caveats 
when using IB in any situation where $Y$ is a deterministic function of $X$: 
(1) the IB curve cannot be recovered by maximizing the IB Lagrangian for different values of $\beta$; (2) there are ``uninteresting'' trivial solutions at all points of the IB curve; and (3) for multi-layer classifiers that achieve low prediction error, different layers cannot exhibit a strict trade-off between compression and prediction, contrary to a recent proposal.  We also show that when $Y$ is a small perturbation away from being a deterministic function of $X$, these three caveats arise in an approximate way. 
To address problem (1), we propose a functional that, unlike the IB Lagrangian, can recover the IB curve in all cases. We demonstrate the three caveats on the MNIST dataset.

\end{abstract}
%
\global\long\def\LIB{\smash{\mathcal{L}_{\text{IB}}^{\scriptscriptstyle\beta}}}
\global\long\def\LdIB{\smash{\mathcal{L}_{\text{dIB}}^{\scriptscriptstyle\beta}}}
\global\long\def\LsqIB{\smash{\mathcal{L}_{\text{sq-IB}}^{\scriptscriptstyle\beta}}}
\global\long\def\LsqdIB{\smash{\mathcal{L}_{\text{sq-dIB}}^{\scriptscriptstyle\beta}}}
\global\long\def\rate{r}
\global\long\def\LCE{\smash{\mathcal{L}_{\text{CE}}}}
\global\long\def\Fdib{F_\text{dIB}}
\def\DKL{D_{\text{KL}}}

\section{Introduction}


The \emph{information bottleneck}
{(}IB{)} method \citep{tishby_information_1999} provides a principled way to extract information that is present in one variable that is relevant for predicting another variable.
Given two random variables $X$ and $Y$,
IB posits a ``bottleneck'' variable $T$ that obeys the Markov condition $Y-X-T$.
By the data processing
inequality (DPI)~\citep{cover_elements_2012}, this Markov condition 
implies that $I(X;T)\ge I(Y;T)$, 
meaning that the bottleneck variable cannot contain more information about $Y$ than it does about $X$. 
In fact, any particular choice of the bottleneck variable $T$ can be quantified
by two terms: the mutual information $I(X;T)$, which reflects how
much $T$ compresses $X$, and the mutual information $I(Y;T)$, which
reflects how well $T$ predicts $Y$. 
In IB, 
bottleneck variables are chosen to maximize prediction given a constraint on compression~\citep{witsenhausen_conditional_1975,ahlswede_source_1975,goos_information_2003},
\begin{equation}
F(\rate) := \max_{T \in \Delta} I(Y;T)\quad\text{s.t.}\quad I(X;T)\le\rate \,,
\label{eq:constrainedproblem}
\end{equation}
where $\Delta$ is the set of random variables $T$ obeying  the
Markov condition $Y-X-T$.
The values of $F(r)$ for different $\rate$ specify 
the \emph{IB curve}. In order to explore the IB curve, one must find  optimal $T$ for different values of $\rate$. 
It is known that the IB curve is concave in $\rate$ but may not be \emph{strictly} concave. This seemingly minor issue of non-strict concavity will play a central role in our analysis.

In practice, the 
IB curve is almost always explored not via the constrained optimization problem of \cref{eq:constrainedproblem}, but rather 
by maximizing
the so-called \emph{IB Lagrangian}, 
\begin{equation}
\LIB(T):=I(Y;T) - \beta I(X;T) \,,
\label{eq:unconstrainedproblem}
\end{equation}
where $\beta\in\left[0,1\right]$
is a parameter that controls the trade-off between compression
and prediction.
The advantage of optimizing $\LIB$ is that it avoids 
the non-linear constraint in \cref{eq:constrainedproblem}\footnote{Note that optimizing $\LIB$ is still a constrained problem in that $p(t\vert x)$
must be a valid conditional probability. However, this constraint
is usually easier to handle, e.g., by using an appropriate
parameterization.}.




Several recent papers have drawn connections between IB
and \emph{supervised learning}, in particular classification using neural networks. 
In this context, $X$ represents input vectors, $Y$ represents the output classes, and $T$ represents intermediate representations used by the network architecture, such as the activity of hidden layer(s)~\citep{tishby2015deep}. 
Some of these papers modify neural network training algorithms so as to 
optimize
the IB Lagrangian~\citep{alemi_deep_2016,chalk_relevant_2016,kolchinsky_nonlinear_2017}, 
thereby permitting the use of IB with high-dimensional, continuous-valued random variables. 
Some papers have also suggested that by controlling the amount of compression, one can tune desired characteristics of trained models such as generalization error~\citep{shamir2010learning,tishby2015deep,vera2018role}, robustness to adversarial inputs~\citep{alemi_deep_2016}, and detection of out-of-distribution data~\citep{alemi2018uncertainty}. 
Other research~\citep{shwartz2017opening} has suggested --- somewhat controversially~\citep{saxe2018information} --- that stochastic gradient descent (SGD) training dynamics may implicitly favor hidden layer mappings that  balances compression and prediction, 
with earlier hidden layers favoring prediction over  compression and latter hidden layers favoring compression over  prediction. 
Finally, there is the general notion that intermediate representations that are  optimal in the IB sense correspond to ``interesting'' or ``useful'' compressions of input vectors~\citep{amjad2018not}.

There are also numerous application domains of IB beyond supervised learning, including clustering~\citep{slonim2000document}, coding theory and quantization~\citep{cardinal2003compression,zeitler2008design,courtade2011multiterminal}, and cognitive science~\citep{zaslavsky2018efficient}.  In most of these applications, it is of central interest to explore solutions at different points on the IB curve, for example to control the number of detected clusters, or to adapt codes to available channel capacity.


In some scenarios, $Y$ may be a deterministic function of $X$, i.e., $Y=f(X)$ for some single-valued function $f$. For example,  in many classification problems, it is assumed that any given input belongs to a single class, which implies a deterministic relationship between $X$ and $Y$. 
In this paper, we demonstrate three caveats for IB that appear whenever $Y$ is a deterministic function of $X$: 

\begin{enumerate}[topsep=-3pt,itemsep=0pt,leftmargin=15pt]
\item There is no one-to-one mapping between different points on the IB curve and maximizers of the IB Lagrangian $\LIB$ for different $\beta$, thus the IB curve cannot be explored by maximizing $\LIB$ while varying $\beta$. This occurs because when $Y=f(X)$, the IB curve has a piecewise linear shape and is therefore not strictly concave. 
The dependence of the IB Lagrangian on the strict concavity of $F(r)$ has been previously noted~\citep{goos_information_2003,shwartz2017opening}, but the implications and pervasiveness of this problem (e.g., in many classification scenarios) has not been fully recognized.
We analyze this issue and propose a solution in the form of an alternative objective function, which can be 
used to explore the IB curve even when $Y=f(X)$. 

\item 
All points on the IB curve contain uninteresting trivial solutions (in particular, stochastic mixtures of two very simple solutions).
This suggests that IB-optimality is not sufficient for an intermediate representation to be an interesting or useful compression of input data.

\item
For a neural network with several hidden layers that achieves a low probability of error, the hidden layers cannot display a 
strict trade-off between compression and prediction (in particular, different layers can only differ in the amount of compression, not prediction).
\end{enumerate}

In \cref{app:dib}, we show that the above three caveats also apply to the recently proposed \emph{deterministic IB} variant of IB~\citep{strouse_deterministic_2017}, in which the compression term is quantified using the entropy $H(T)$ rather than the mutual information $I(X;T)$. In that Appendix, we propose an alternative objective function that can be used to resolve the first problem for dIB. 

We also show, in \cref{app:approx}, that our results apply when $Y$ is not {exactly} a deterministic function of $X$, but $\epsilon$-close to one.
In this case: (1) it is hard to explore the IB curve by optimizing the IB Lagrangian, because all optimizers will fall within $\mathcal{O}(-\epsilon \log \epsilon)$ of a single ``corner'' point on the information plane; (2) along all points on the IB curve, there are ``uninteresting'' trivial solutions that are no more than $\mathcal{O}(-\epsilon \log \epsilon)$ away from being optimal; (3) different layers of a neural networks can  trade-off at most $\mathcal{O}(-\epsilon \log \epsilon)$ amount of prediction.

A recent paper~\citep{amjad2018not} also discusses several difficulties in using IB to analyze intermediate representations in supervised learning. That paper does not consider the particular issues that arise when $Y$ is a deterministic function of $X$, and its arguments are complementary to ours.  
\cite{shwartz2017opening}[Sec. 2.4] discuss another caveat for IB in deterministic settings, concerning the relationship between sufficient statistics and the complexity of the input-output mapping, which is orthogonal to the three caveats analyzed here. 
Finally, ~\citet{saxe2018information} and ~\citet{amjad2018not} observed that when $T$ is continuous-valued and a deterministic function of a continuous-valued $X$, $I(X;T)$ can be unbounded, making the application of the IB framework problematic. We emphasize that the caveats discussed in this paper are unrelated to that problem.

Note that our results are based on analytically-provable properties of the IB curve, i.e., global optima of \cref{eq:constrainedproblem}, 
and do not concern practical issues of optimization (which may be important in real-world scenarios). 
Our theoretical results are also independent of the practical issue of estimating MI between neural network layers, an active area of recent research~\citep{belghazi2018mine,goldfeld2018estimating,dai2018compressing,gabrie2018entropy}, though our empirical experiments in \cref{sec:mnist} rely on the estimator proposed in~\citep{kolchinsky2017estimating,kolchinsky_nonlinear_2017}. Finally, our results are also independent of issues related to the relationship between IB, finite data sampling, and generalization error~\citep{shamir2010learning,tishby2015deep,vera2018role}.

In the next section, we review some of the connections between supervised learning and IB.  In \cref{sec:piecewiselinearcurve}, we show that when $Y=f(X)$, the IB curve has a piecewise linear (not strictly concave) shape.  In \cref{sec:practical,,sec:conceptualtradeoff,,sec:conceptual}, we discuss the three caveats mentioned above. In \cref{sec:mnist}, we demonstrate the caveats using a neural-network implementation of IB on the MNIST dataset.


\section{Supervised Classification and IB}
\label{sec:supervised-learning-and-deterministic}

\def\slT{T}
\def\slt{t}

\newcommand{\predY}{\hat{Y}}
\newcommand{\predy}{\hat{y}}
\newcommand{\predsingleY}{\tilde{Y}}

\newcommand{\empP}{\bar{p}}
\newcommand{\qYT}{ q_\theta({\predY \vert \slT}) }
\newcommand{\qyt}{ q_\theta({\predy \vert \slt}) }
\newcommand{\qiyt}{ q_\theta({\predy_i \vert \slt}) }
\newcommand{\qyx}{ q_\theta({\predy \vert x}) }

\newcommand{\pYT}{ \empP_\theta(Y|\slT) }
\newcommand{\pyt}{ \empP_\theta(y|\slt) }
\newcommand{\pYjT}{ \empP_\theta(Y,\slT) }
\newcommand{\pyjt}{ \empP_\theta(y,\slt) }


In supervised learning, one is given a training dataset $\left\{ x_{i},y_{i}\right\}_{i=1..N}$ of inputs and outputs. 
In this case, the random variables $X$ and $Y$ refer to the inputs and outputs respectively, 
and $\empP(x,y)$ indicates their joint empirical distribution in the training data.
It is usually assumed that the $x$'s and $y$'s are sampled i.i.d. from some ``true'' distribution $w(y\vert x)w(x)$.
The high-level goal of supervised learning is to use the dataset to select a particular conditional distribution ${\qyx}$ of outputs given inputs parameterized by $\theta$ 
that is a good approximation of $w(y\vert x)$. 
For clarity, we use $\predY$ (and $\predy$) to indicate a random variable (and its outcome) corresponding to the \emph{predicted} outputs. We use $\sX$ to indicate the set of outcomes of $X$, and $\sY$ to indicate the set of outcomes of $Y$ and $\predY$. 
Supervised learning is called \emph{classification} when $Y$ takes a finite set of values, and \emph{regression} when $Y$ is continuous-valued.
Here we focus on classification, where 
the set of possible output values can be written as 
$\sY=\{0, 1,\dots, m\}$. 
We expect our results to also apply in some regression scenarios (with some care regarding evaluating MI measures, see \cref{foot:mi}), but leave this for future work.

In practice, many supervised learning architectures use some kind of intermediate representation to make predictions about the output, such as hidden layers in neural networks.
In this case, the random variable $\slT$ represents the activity of some particular hidden layer in a neural network. Let $T$ be a (possibly stochastic) 
function of the inputs, as determined by some parameterized conditional distribution $q_\theta(\slt \vert x)$, so that $T$ obeys the Markov condition $Y - X - T$. 
The mapping from inputs to hidden layer activity can be either deterministic, as traditionally done in neural networks, or stochastic, as used in some architectures~\citep{alemi_deep_2016,kolchinsky_nonlinear_2017,achille2018information}\footnote{\label{foot:mi}If  $T$ is continuous-valued and a deterministic function of a continuous-valued $X$, then one should consider noisy or quantized mappings from inputs to hidden layers, such that $I(X;T)$ is finite.}.
The mapping from hidden layer activity to predicted outputs is represented by  the parameterized conditional distribution $\qyt$, so the full Markov condition between true outputs, inputs, hidden layer activity, and predicted outputs is $Y - X - T - \predY$. 
The overall mapping from inputs to predicted outputs implemented by a neural network with a hidden layer can be written as
\begin{align}
\qyx := \int \qyt q_\theta(\slt \vert x) d\slt 
\,.
\label{eq:ind}
\end{align}

Note that $\slT$ does not have to represent a particular hidden layer, but could instead represent the activity of the neurons in a set of contiguous hidden layers, or in fact any arbitrary set of neurons that separate inputs from predicted outputs (in the sense of conditional independence,  so that \cref{eq:ind} holds). 
More generally yet, $\slT$ could also represent some intermediate representation in a non-neural-network supervised learning architecture, again as long as \cref{eq:ind} holds. 

For classification problems, training often involves selecting $\theta$ to minimize \emph{cross-entropy loss}, sometimes while stochastically sampling the activity of hidden layers.  When $T$ is the last hidden layer, or when the mapping from $T$ to the last hidden layer is deterministic, cross-entropy loss can be written
\begin{align}
\LCE(\theta) = -\frac{1}{N} \sum_{i} \int q_\theta(\slt\vert x_i) \log \qiyt \,d\slt = \mathbb{E}_{\pYjT}\left[ -\log \qYT \right] \,,
\label{eq:celoss}
\end{align}
where $\mathbb{E}$ indicates expectation, $\pyjt := \frac{1}{N} \sum_i \delta(y, y_i) q_\theta(\slt\vert x_i)$ 
is the empirical distribution of outputs and hidden layer activity, and $\delta$ is the Kronecker delta.  \cref{eq:celoss} can also be written as
\begin{align}
\LCE(\theta) =  H(\pYT) + \DKL(\pYT \Vert \qYT )
\label{eq:ce-decomp}
\end{align}
where $H$ is (conditional) Shannon entropy, $\DKL$ is Kullback-Leibler (KL) divergence, and $\pyt$ is defined via the definition of $\pyjt$ above. 
Since KL is non-negative, $\LCE(\theta)$ is an upper bound on the conditional entropy $H(\pYT)$. 
During training, one can decrease $\LCE$ either by decreasing the conditional entropy $H(\pYT)$, or decreasing the KL term by changing $\qYT$ to better approximate $\pYT$. 
Minimizing $\LCE$ 
thus minimizes an upper bound on $H(\pYT)$, which is equivalent to maximizing a lower bound on  $I_{\empP_\theta}(Y;\slT)= H(\empP(Y))-H(\pYT)$ (since $H(\empP(Y))$ is a constant that doesn't depend on $\theta$).

We can now make explicit the relationship between supervised learning and IB.
In IB, one is given a joint  distribution $p(x,y)$, and then seeks a bottleneck variable $T$ that obeys the Markov condition $Y - X - T$ and minimizes $I(X;T)$ while maximizing $I(Y;T)$. In supervised learning, one is given an empirical distribution $\empP(x,y)$ and defines an intermediate representation $\slT$ that obeys the Markov condition $Y - X - \slT$; during training, cross-entropy loss is minimized, which is equivalent to maximizing a lower bound on $I(Y;\slT)$ (given assumptions of \cref{eq:celoss} hold). 
To complete the analogy, one might choose $\theta$ so as to also minimize $I(X;T)$, i.e., choose hidden layer mappings that provide compressed representations of input data
~\citep{alemi_deep_2016,chalk_relevant_2016,kolchinsky_nonlinear_2017,achille2018information}. In fact, we use such an approach in our  experiments on the MNIST dataset in \cref{sec:mnist}. 


In many supervised classification problems, there is only one correct class label associated with each possible input. Formally, this means that given the empirical training dataset distribution $\empP(y\vert x)$, one can write $Y = f(X)$ for some single-valued function $f$. 
This relationship between $X$ and $Y$ may arise because it holds under the ``true'' distribution $w(y\vert x)$ from which the training dataset is sampled, or simply because each input vector $x$ occurs at most once in the training data (as happens in most real-world classification training datasets).
Note that we make no assumptions about the relationship between $X$ and $\predY$ under $\qyx$, the distribution of predicted outputs given inputs implemented by the supervised learning architecture, and our analysis  holds even if $\qyx$ is non-deterministic (e.g., the softmax function). 




It is important to note that not all classification problems are deterministic.  The map from $X$ to $Y$ may be  intrinsically noisy or non-deterministic (e.g., if human labelers cannot agree on a class for some $x$), 
or noise may be intentionally added as a regularization technique (e.g., as done in the \emph{label smoothing} method~\citep{szegedy2016rethinking}).
Moreover, one of the pioneering papers on the relationship between IB and supervised learning~\citep{shwartz2017opening} analyzed an artificially-constructed classification problem in which $p(y\vert x)$ was explicitly defined to be noisy (see their Eq.~10). 
However, we believe that many if not most real-world classification problems do in fact exhibit a deterministic relation between $X$ and $Y$. In addition, as we show in \cref{app:approx}, even if the relationship between $X$ and $Y$ is not perfectly deterministic but close to being so, then the three caveats discussed in this paper still apply in an approximate sense. 




\section{The IB curve when \texorpdfstring{$Y$}{Y} is a deterministic function of \texorpdfstring{$X$}{X}}
\label{sec:piecewiselinearcurve}

\begin{wrapfigure}{R}{2.25in}
\vspace{-10pt}
\centering
\includegraphics[width=2.25in]{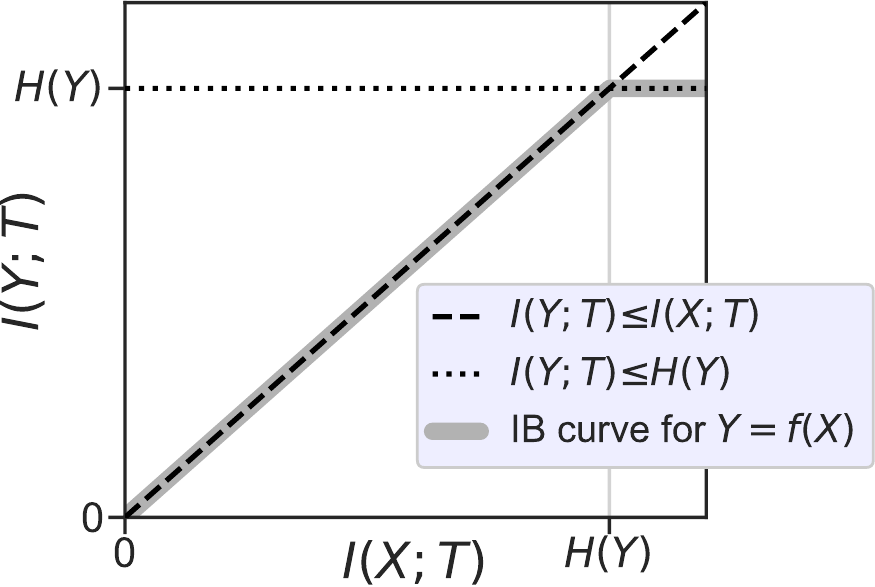}
\caption{
A schematic IB curve.  Dashed line is DPI, 
dotted line is $I(Y;T) \le H(Y)$. When $Y=f(X)$, the IB curve saturates both bounds. 
\label{fig:ibcurveschematic}}
\end{wrapfigure}

Consider the random variables $X,Y$ where $Y$ is discrete-valued and a deterministic function of $X$, i.e., $Y=f(X)$. $I(Y;T)$ can be upper bounded as $I(Y;T) \le I(X;T)$ by the DPI, and $I(Y;T) \le H(Y)$ by basic properties of MI~\citep{cover_elements_2012}.
To visualize these bounds, as well as other results of our analysis, we use  so-called ``information plane'' diagrams~\citep{tishby_information_1999}.  The information plane represents various possible bottleneck variables in terms of their compression ($I(X;T)$, horizontal axis) and prediction ($I(Y;T)$, vertical axis).
The two bounds mentioned above 
are plotted
on an information plane in \cref{fig:ibcurveschematic}. In this section, we show that when $Y=f(X)$, the IB curve 
saturates both bounds, and is not strictly concave but rather piece-wise linear. 

To show that the IB curve saturates the DPI bound $I(Y;T) \le I(X;T)$ for $I(X;T) \in [0,H(Y)]$,  let $B_\alpha$ be a Bernoulli random variable that is equal to 1 with probability $\alpha$. 
Then, define a  manifold of bottleneck variables $T_\alpha$ parameterized by $\alpha \in [0,1]$,
\begin{align}
T_\alpha := B_\alpha \cdot Y = B_\alpha \cdot f(X) \,.
\label{eq:TalphaDef}
\end{align}
Thus, $T_\alpha$ is equal to $Y$ with probability $\alpha$, and equal to 0 with probability $1-\alpha$.
From \cref{eq:TalphaDef}, it is clear that $T_\alpha$ can be written either as a (stochastic) function of $X$ or a (stochastic) function of $Y$, thus both Markov conditions $Y - X - T_\alpha$ and $X - Y - T_\alpha$ hold. 
Applying the DPI to both Markov conditions 
implies that $I(X;T_\alpha) = I(Y; T_\alpha)$ for any $T_\alpha$.
Note that for $\alpha=1$, $T_\alpha = Y$ and thus $I(Y;T_\alpha)=H(Y)$, while 
for $\alpha=0$, $T_\alpha=0$ and 
thus $I(Y;T_\alpha)=0$.  Since MI is continuous in probabilities, the manifold of bottleneck variables $T_\alpha$ sweeps the full range of values $\langle I(X;T_\alpha), I(Y;T_\alpha) \rangle = \langle 0, 0\rangle$ to $\langle I(X;T_\alpha), I(Y;T_\alpha)\rangle =\langle H(Y),H(Y)\rangle$ as $\alpha$ ranges from 0 to 1, while obeying $I(X; T_\alpha) = I(Y ; T_\alpha)$.  Thus, the manifold of $T_\alpha$ bottleneck variables achieves the DPI bound over $I(X;T) \in [0,H(Y)]$. 

Given its definition in \cref{eq:constrainedproblem}, the IB curve is monotonically increasing. Since $\langle H(Y),H(Y) \rangle$ is on the IB curve (e.g., $T_\alpha$ for $\alpha=1$), it must be that $I(Y;T) \ge H(Y)$ when $I(X;T) \ge H(Y)$ for optimal $T$. At the same time, it is always the case that $I(Y;T) \le H(Y)$, by basic properties of MI. Thus, the IB curve  is a flat line at $I(Y;T) = H(Y)$ for $I(X;T)\ge H(Y)$.

Thus, when $Y=f(X)$, the IB curve is piecewise linear 
and therefore not strictly concave.\footnote{The IB curve may be not strictly concave in other cases as well. A sufficient condition for the IB curve to be strictly concave is for $p(y\vert x)>0$ for all $x,y$ ~\citep[Lemma 6]{goos_information_2003}.}  

\section{Issue 1: \issueone}
\label{sec:practical}

We now show that when $Y=f(X)$,  
one cannot explore the 
IB curve by  solving $\argmax_{T\in\Delta} \LIB$ while varying $\beta$, because 
there is no one-to-one mapping between points on the IB curve and optimizers of $\LIB$ 
for different $\beta$. 
This issue is diagrammed in \cref{fig:tradeoffcurve}, and explained formally below.

Assume $Y=f(X)$.  For any bottleneck
variable $T$ and all $\beta\in[0,1]$,\footnote{It is sufficient to consider $\beta \in [0,1]$ because for $\beta < 0$, uncompressed solutions like ${T}=X$ are maximizers of $\LIB$, while for $\beta > 1$, $\LIB$ is non-positive and trivial solutions such as ${T}=\text{const}$ are maximizers.} 
$\LIB$ is upper bounded by
\begin{align}
\LIB(T) = I(Y;T) - \beta I(X;T) \le  (1-\beta)I(Y;T) \le (1-\beta)H(Y)
\label{eq:ibbound}
\end{align}
where the first inequality uses the DPI, and the second 
$I(Y;T)\le H(Y)$.

Now consider the bottleneck variable $\Tcopy := f(X) = Y$ (or, equivalently, any one-to-one transformation of $\Tcopy$), which corresponds to $T_\alpha$ for $\alpha=1$ in \cref{eq:TalphaDef}, and is the ``corner point''  of the IB curve in \cref{fig:ibcurveschematic}. 
Observe that
$\LIB(\Tcopy) = (1-\beta) H(Y)$, so $\Tcopy$  achieves the bound of \cref{eq:ibbound}. 
Thus, $\Tcopy$ optimizes $\LIB$ for all $\beta\in[0,1]$, 
though it represents only a single point on the IB curve.

The above argument shows that a single point on the IB curve optimizes $\LIB$ for many different $\beta$.  
Conversely, many points on the curve can simultaneously optimize $\LIB$ for a single $\beta$. 
In particular, for $\beta=1$, 
$\LIB(T) = I(Y;T)-I(X;T) \le 0$ (the inequality follows from the DPI). Any $T$ that has $I(X;T)=I(Y;T)$ --- such as the manifold of $T_\alpha$ defined in \cref{eq:TalphaDef} --- achieves the maximum $\LIB(T)=0$ for $\beta=1$, so all points 
on the non-flat part of IB curve in \cref{fig:ibcurveschematic} simultaneously maximize $\LIB$ for this $\beta$. 
For $\beta=0$, 
$\LIB(T) = I(Y;T) \le H(Y)$, and this upper bound is achieved by all $T$ that have $I(Y;T)=H(Y)$. Thus, all solutions on the flat part of the IB curve in \cref{fig:ibcurveschematic} simultaneously maximize $\LIB$ for $\beta=0$. 
Note that common supervised learning algorithms that minimize cross-entropy loss can be considered to have $\beta = 0$, where there is no explicit benefit to compression, and their intermediate representations will typically fall into this regime.

The problem is illustrated visually in \cref{fig:tradeoffcurve}, which shows two hypothetical IB curves (in thick gray lines): a not strictly concave (piecewise linear) curve (left), as occurs when $Y$ is a deterministic function of $X$, and a strictly concave curve (right). 
In both subplots, the colored lines indicate isolines of the IB Lagrangian.
The highest point on the IB curve crossed by a given colored line is marked by a star, and represents the  $\langle I(X;T) , I(Y;T) \rangle$ values for a $T$ which optimizes $\LIB$ for the corresponding $\beta$.
For the piecewise linear IB curve, all $\beta\in(0,1)$ only recover a single
solution ($\Tcopy$). For the strictly concave IB curve, different
$\beta$ values recover different solutions.


To summarize, when $Y=f(X)$, the IB curve is piecewise linear and cannot be explored by optimizing $\LIB$ while varying $\beta$.
On the flip side, if  $Y=f(X)$ and one's goal is precisely to find the corner point $\langle H(Y), H(Y) \rangle$ (maximum compression at no prediction loss), then our results suggest that the IB Lagrangian provides a robust way to do so, being invariant to the particular choice of $\beta$.  
Moreover, if one does want to recover solutions at different points on the IB curve, 
and $f : \sX \rightarrow \sY$ is fully known, 
our results show how to do so in a closed-form way (via $T_\alpha$), without any optimization.

Is there a single procedure that can be used to explore the IB curve in all cases, whether it is strictly convex or not?  In principle,  this can be done by solving 
the constrained optimization of \cref{eq:constrainedproblem} for different values of $\rate$. 
In practice, however, solving this constrained optimization problem --- even approximately --- is far from a simple task, in part due to the non-linear constraint on $I(X;T)$, and many off-the-shelf optimization tools cannot handle this kind of problem.  
It is desirable to have an unconstrained objective function that can be used to explore any IB curve. 
For this purpose, 
we  propose an {alternative objective function}, which  
 we call the \emph{squared-IB functional},
\begin{equation}
\LsqIB(T):=I(Y;T) - \beta I(X;T)^{2} \,. \label{eq:sqIB}
\end{equation}

We show that maximizing $\LsqIB$ while varying $\beta$ recovers the IB curve, whether it is strictly concave or not.  
To do so, we first analyze why the IB Lagrangian fails when $Y=f(X)$ (or, more generally, when the IB curve is piecewise linear).  
Over the increasing region of the IB curve, the inequality constraint in \cref{eq:constrainedproblem} can be replaced by an equality constraint~\cite[Thm. 2.5]{witsenhausen_conditional_1975}, so maximizing $\LIB$ can be written as $\max_T I(Y;T) - \beta I(X;T)= \max_r F(r) - \beta r$ (i.e., the Legendre transform of $-F(r)$).
The derivative of $F(r) - \beta r$ is zero when $F'(r) = \beta$, and any point on the IB curve that has $F'(r) = \beta$ will maximize $\LIB$ for that $\beta$. 
When $Y=f(X)$,  all points on the increasing part of the curve have $F'(r)=1$, and will {all} simultaneously maximize $\LIB$ for $\beta=1$. More generally, all points on a linear segment of a piecewise linear IB curve will have the same $F'(r)$, and will all simultaneously maximize $\LIB$ for the corresponding $\beta$.  

Using a similar argument as above, we write $\max_T \LsqIB = \max_{\rate} F(\rate) - \beta \rate^2$.
The derivative of $F(\rate) - \beta \rate^2$ is 0 when $F'(r)/(2r)=\beta$.  
Since $F$ is concave,  $F'(r)$ is decreasing in $r$, though it is not strictly decreasing when $F$ is not strictly concave. $F'(r)/(2r)$, on the other hand, is strictly decreasing in $r$ in all cases, thus 
there can be only one $r$ such that $F'(r)/(2r)=\beta$ for a given $\beta$. For this reason, for any IB curve, there can be only one point that maximizes $\LsqIB$ for a given $\beta$.
\footnote{For simplicity, we consider only the differentiable points on the IB curve (a more thorough analysis would look at the superderivatives of $F$).  Since $F$ is concave, however, it must be differentiable almost everywhere~\cite[Thm 25.5]{rockafellar2015convex}.}

\begin{SCfigure}
\includegraphics[width=3in]{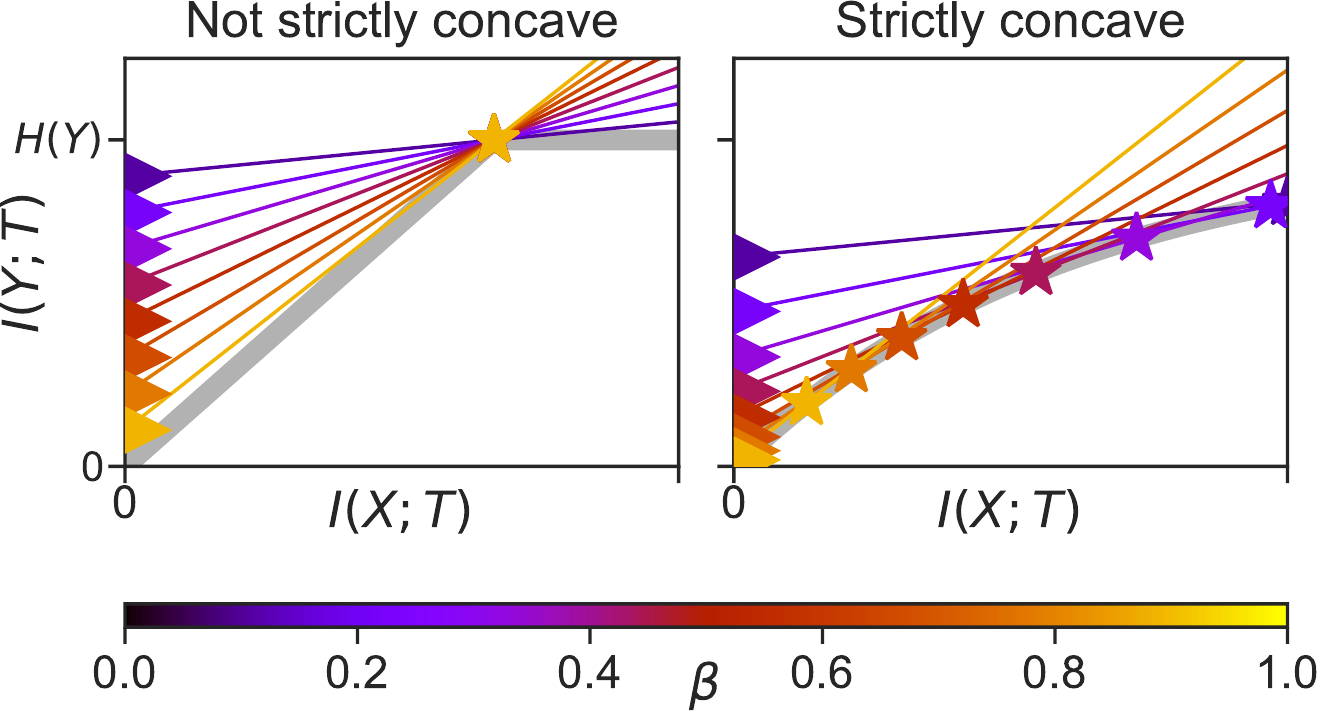}
\vspace{-5pt}
\caption{
\textbf{Failure of IB Lagrangian.}
Thick gray lines shows two possible IB
curves, thin lines are isolines of $\LIB$ for
different $\beta$ (colors indicate different $\beta$), stars indicate achievable  $\langle I(X;T),I(Y;T)\rangle$ that maximize $\LIB$. 
For not strictly concave IB curve, different
$\beta\in(0,1)$ only recover a single point.  For strictly concave IB curve, different $\beta$ recover different points. 
\label{fig:tradeoffcurve}
}
\end{SCfigure}

\begin{SCfigure}
\vspace{-5pt}
\includegraphics[width=3in]{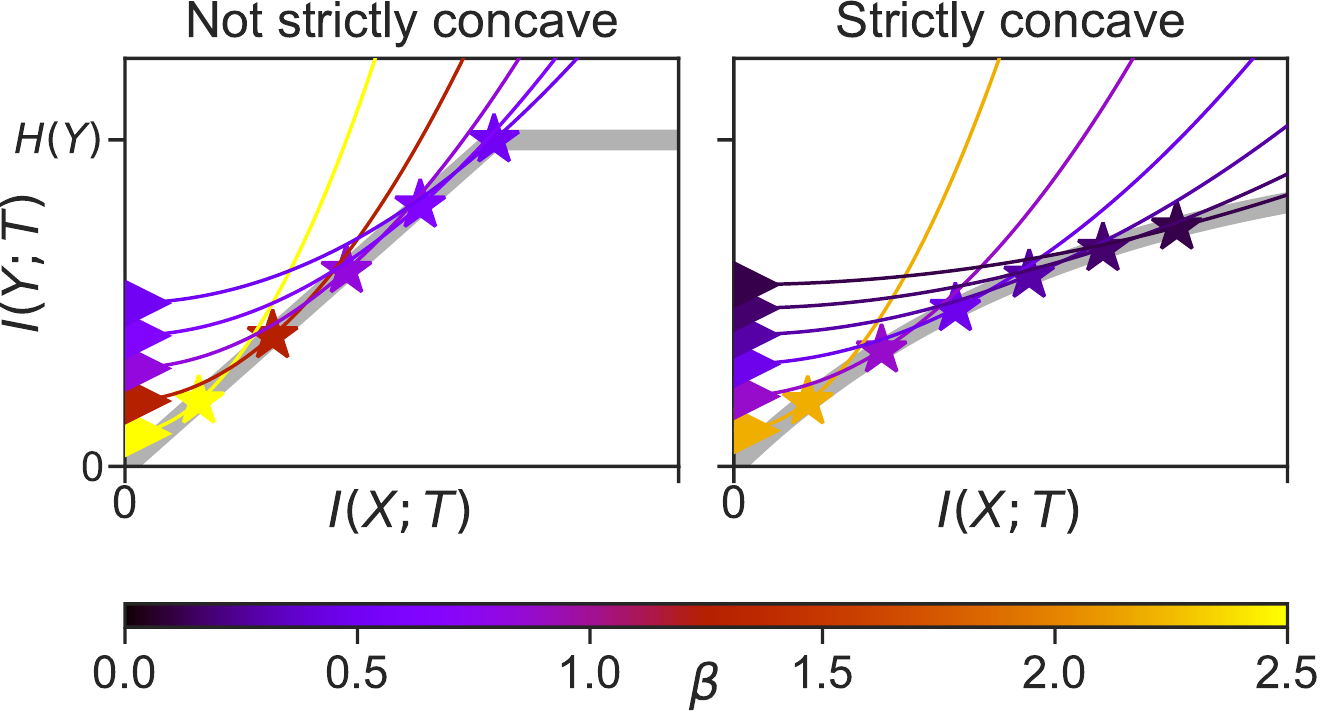}
\caption{
\textbf{Success of squared-IB functional}. 
Thick gray lines shows two possible IB
curves, thin lines are isolines of $\LsqIB$ for
different $\beta$ (colors indicate different $\beta$), stars indicate achievable  $\langle I(X;T),I(Y;T)\rangle$ that maximize $\LsqIB$.  
For both the not strictly concave IB curve and the strictly
concave curve, different $\beta$ values recover different solutions.\label{fig:modified}%
}
\end{SCfigure}

Note also that any $r$ that satisfies $F'(r) = \beta$ also satisfies $F'(r)/(2r) = \beta/(2r)$. Thus, any point that maximizes $\LIB$ for a given $\beta$ 
also maximizes $\LsqIB$ under the transformation $\beta \mapsto \beta / (2 \cdot I(X;T))$, and vice versa.  Importantly, unlike for $\LIB$, there can be non-trivial maximizers of $\LsqIB$ for $\beta > 1$.

The effect of optimizing $\LsqIB$ is illustrated in \cref{fig:modified}, which  shows two different IB curves (not strictly concave and strictly concave), isolines of $\LsqIB$ for different $\beta$, and points on the curves that maximize $\LsqIB$. 
For both curves, the maximizers for different $\beta$ are at different points of the curve. 

Finally, note that the IB curve is the Pareto front of the {multi-objective optimization problem}, $\{\max I(Y;T), \min I(X;T)\}$.  $\LsqIB$ is one modification of $\LIB$ that allows us to explore a non-strictly-concave Pareto front.  However, it is not the only possible modification, and other alternatives may be considered. 
For a full treatment of multi-objective optimization, see \cite{miettinen1999nonlinear}.

\section{Issue 2: \issuetwo}
\label{sec:conceptual}




It is often implicitly or explicitly assumed that IB optimal variables provide ``useful'' or ``interesting'' representations of the relevant information in $X$ about $Y$ (see also discussion and proposed criteria in \citep{amjad2018not}).  While concepts like usefulness and interestingness are subjective, the intuition can be illustrated using the following example. Consider the ImageNet task~\citep{deng2009imagenet}, which labels images into 1000 classes, such as \classtexttt{border collie}, \classtexttt{golden retriever}, \classtexttt{coffeepot}, \classtexttt{teapot}, etc. 
It is natural to expect that as one explores the IB curve for the ImageNet dataset, one will identify useful compressed representations of the space of images. Such useful  representations might, for instance, specify hard-clusterings 
that merge together inputs belonging to perceptually similar classes, such as 
 \classtexttt{border collie} and  \classtexttt{golden retriever} (but not \classtexttt{border collie} and \classtexttt{teapot}).
Here we show that such intuitions will not generally hold when $Y$ is a deterministic function of $X$.

Recall the analysis in \cref{sec:piecewiselinearcurve}, where \cref{eq:TalphaDef} defines the manifold of bottleneck variables $T_\alpha$ parameterized by $\alpha \in [0,1]$.
The manifold of $T_\alpha$, which spans the entire increasing portion of the IB curve, represents a mixture of two trivial solutions: a constant mapping to 0, and an exact copy of $Y$. 
Although the bottleneck variables on this manifold are IB-optimal, they do not offer interesting or useful representations of the input data.
The compression offered by $T_\alpha$ arises by ``forgetting'' the input with some probability $1-\alpha$, rather than performing any kind of useful hard-clustering, while the prediction comes from full knowledge of the function mapping inputs to outputs, $f : \sX \to \sY$. 



To summarize, when $Y$ is a deterministic function of $X$, the fact that a variable $T$ is on the IB curve does not necessarily imply that $T$ is an interesting compressed representation of $X$.
At the same time, we do not claim that \cref{eq:TalphaDef} provides unique solutions.
There may also be IB-optimal variables that \emph{do} compress $X$ in interesting and useful ways (however such notions may be formalized).  Nonetheless, when $Y$ is a deterministic function of $X$, for any  ``interesting'' $T$, there will also be an ``uninteresting'' $T_\alpha$ that achieves the same compression $I(X;T)$ and prediction $I(Y;T)$ values, and IB does not distinguish between the two. 
Therefore, identifying useful compressed representations must generally require the use of quality functions other than just IB. 

\section{Issue 3: \issuethree}
\label{sec:conceptualtradeoff}


So far we've analyzed  IB in terms of a single intermediate representation $T$ (e.g., a single hidden layer in a neural network). Recent research in machine learning, however, has focused on ``deep'' neural networks, with multiple successive hidden layers.  What is the relationship between the compression and prediction achieved by these different layers?
Recently, \cite{shwartz2017opening} suggested that due to SGD 
dynamics, different layers will explore a strict trade-off between compression and prediction: early layers will sacrifice compression (high $I(X;T)$) for good prediction (high $I(Y;T)$), while latter layers will sacrifice prediction (low $I(Y;T)$) for good compression (low $I(X;T)$).  The authors demonstrated a strict trade-off using an artificial classification dataset, in which $Y$ was defined to be a noisy function of $X$  (their Eq.~10 and Fig.~6). As we show, however, this outcome cannot generally hold when $Y$ is a deterministic function of $X$.

Recall that we consider IB in terms of the empirical distribution of inputs, hidden layers, and outputs given the training data (\cref{sec:supervised-learning-and-deterministic}).  
Suppose that a classifier achieves 0 probability of error (i.e., classifies every input correctly) on training data. This event, which can only occur when $Y$ is a deterministic function of $X$, is in fact commonly observed in real-world deep neural networks~\citep{zhang2016understanding}. In such cases, we show that 
while latter layers may have better compression 
than earlier layers, they cannot have worse prediction 
than earlier layers.  Therefore, different layers can only demonstrate a \emph{weak} trade-off between compression and prediction. 
(The same argument holds if the neural network achieves 0 probability of error on held-out testing data, as long as the  information-theoretic measures are evaluated on the same held-out data distribution.)

Consider a neural network with $k$ hidden layers, where each successive layer is a (possibly stochastic) function of the preceding one, and let $\T{1}, \T{2}, \dots, \T{k}$ indicate the activity of layer $1, 2, \dots, k$. 
Given the predicted distribution of outputs, $q_\theta( \predY = \predy \vert \T{k} = t_k)$, one can make a ``point prediction'' $\predsingleY$, e.g., 
by choosing the class with the highest predicted probability, $\predsingleY := \argmax_{\predy} q_\theta( \predy \vert \T{k})$.  
Given that the true output $Y$ is a function of $X$, the above architecture obeys the  Markov condition
\begin{align}
Y - X - \T{1} - \T{2} - \dots - \T{k} - \hat{Y} - \predsingleY \,.
\label{eq:chaincond}
\end{align}
By the DPI, for any $i <j$ we have the inequalities

\noindent\begin{minipage}{.5\linewidth}
\begin{equation}
I(Y;\T{i}) \ge I(Y;\T{j}) \,,
\label{eq:layersdpi}
\end{equation}
\end{minipage}%
\begin{minipage}{.5\linewidth}
\begin{equation}
I(X;\T{i}) \ge I(X;\T{j}) \,.
\label{eq:layersdpi2}
\end{equation}
\end{minipage}

Applying Fano's inequality~\cite[p. 39]{cover_elements_2012} to the chain $Y - \T{k} - \predsingleY$ gives
\begin{align}
\label{eq:fanos}
\mathcal{H}(P_e) + P_e \log (\cardY-1)  \ge H(Y\vert \T{k}) \,,
\end{align}
where $\cardY$ is the number of output classes, 
$P_e = \text{Pr}(Y \ne \predsingleY)$ is the probability of error, and  $\mathcal{H}(x) := -x \log x - (1-x) \log (1-x)$ is the binary entropy function. 
Given our assumption that $P_e = 0$, \cref{eq:fanos} implies that $H(Y\vert \T{k}) = 0$ and thus 
$I(\T{k}; Y) = H(Y) - H(Y\vert \T{k}) = H(Y)$. 
Combining with \cref{eq:layersdpi} and $I(\T{i}; Y) \le H(Y)$ means that 
$I(\T{i}; Y) = H(Y)$ for all $i=1, \dots, k$. 


The above argument neither proves nor disproves the proposal in \citep{shwartz2017opening} that SGD dynamics favor hidden layer mappings that provide compressed representations of the input. Instead, our point is that  for classifiers that achieve 0 prediction error, a strict compression/prediction trade-off is not possible. Even so, by \cref{eq:layersdpi2} it is possible that latter layers have more compression than earlier layers while achieving the same prediction level (i.e., a weak trade-off). In terms of \cref{fig:ibcurveschematic}, this means that different layers will be on the flat part of the IB curve.


Note that some neural network architectures do not have a simple layer structure, in which each layer's activity is a stochastic function of the previous layer's, but  rather allow information to flow in parallel across different channels~\citep{szegedy2015going} or to skip across some layers~\citep{he2016deep}. The analysis in this section can still apply to such cases, as long as the different $\T{1},\dots,\T{k}$ are chosen to correspond to groups of neurons that satisfy the Markov condition in \cref{eq:chaincond}. 

\section{A real-world example: MNIST}
\label{sec:mnist}

We demonstrate the three caveats using the 
MNIST dataset of hand-written digits. 
To do so, we use the ``nonlinear IB'' method~\citep{kolchinsky_nonlinear_2017}, which uses gradient-descent-based training to minimize cross-entropy loss 
plus a differentiable non-parametric estimate of 
$I(X;T)$~\citep{kolchinsky2017estimating}.
We use this technique to maximize a lower bound on the IB Lagrangian, as explained in \citep{kolchinsky_nonlinear_2017}, 
as well as  
a lower bound on the squared-IB functional.

In our architecture,
the bottleneck variable, $T \in \mathbb{R}^2$, corresponds to the activity of a hidden layer with two hidden units.  Using a two-dimensional bottleneck variable facilitates easy visual analysis of its activity. 
The map from input $X$ to variable $T$ has the form $T = \predfunc(X) + Z$,
where $Z \sim \mathcal{N}(0, \mathbf{I})$ is noise, and $\predfunc$ is a deterministic function implemented using three fully-connected layers: two layers with 800 ReLU units each, and a third layer with two linear units. 
Note that the stochasticity in the mapping from $X$ to $T$ makes our mutual information term, $I(X;T)$, well-defined and finite.
The decoding map, $\qyt$, uses  a fully-connected layer with 800 ReLU units, followed by an output layer of 10 softmax units.
Results are reported for training data. See \cref{sec:mnistappendix} for more details and figures, including results on testing data. 
TensorFlow code can be found at \codeurl.

Like other practical general-purpose IB methods, ``nonlinear IB'' is not guaranteed to find globally-optimal bottleneck variables due to factors like:
(a) difficulty of optimizing the non-convex IB objective;
(b) error in estimating $I(X;T)$; 
(c) limited model class of $T$ (i.e., $T$ must be expressible in the form of $T = \predfunc(X) + Z$); 
(d) mismatch between actual decoder $\qYT$ and optimal decoder $\pYT$ (see \cref{sec:supervised-learning-and-deterministic}); and
(e) stochasticity of training due to SGD.
Nonetheless, in practice, the solutions discovered by nonlinear IB were very close to IB-optimal, and are sufficient to demonstrate the three caveats discussed in the previous sections.

\begin{figure}
\centering
\vspace{-5pt}
\includegraphics[width=1.0\textwidth]{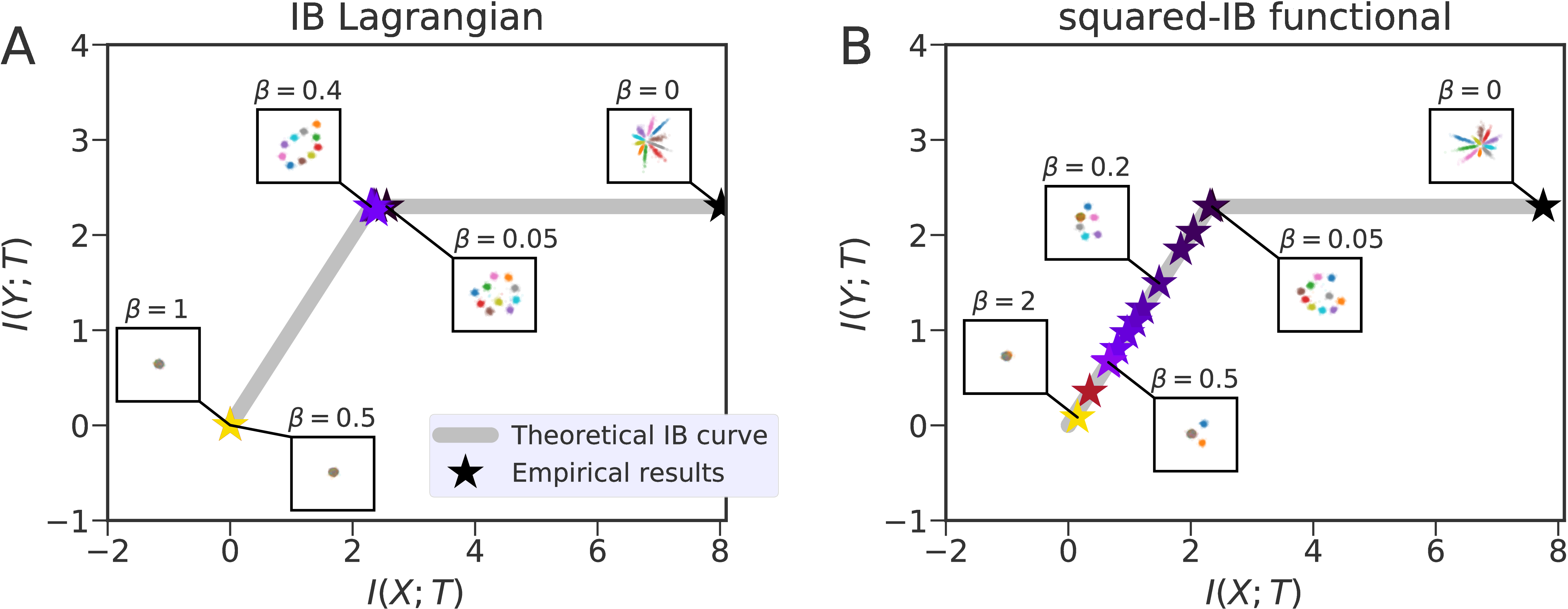}
\vspace{-15pt}
\caption{
Theoretical and empirical IB curve found by maximizing the IB Lagrangian (A) and the squared-IB functional (B).  Stars represent solutions recovered for different values of $\beta$.  Insets show the bottleneck variable states (i.e., activity of the hidden layer) for different solutions, where colors represent inputs corresponding to different classes (i.e., different digits).
MI values in nats.
\label{fig:mnistfig}}
\end{figure}


\textbf{Issue 1: \issueone}

We first demonstrate that the IB curve cannot be explored by maximizing the IB Lagrangian, 
but can be explored by maximizing the squared-IB functional, thus supporting the arguments in 
\cref{sec:practical}.
\cref{fig:mnistfig} shows the theoretical IB curve for the MNIST dataset, as well as the IB curve empirically recovered by maximizing these two functionals 
(see also \cref{fig:appendixIB} for more details).

By optimizing the IB Lagrangian (\cref{fig:mnistfig}A), we are only able to find three IB-optimal points:
\\
 (1) Maximum prediction accuracy and no compression, $I(Y;T)=H(Y)$, $I(X;T)\approx 8\text{ nats}$;\\
(2) Maximum compression possible at maximum prediction, $I(Y;T)=I(X;T)=H(Y)$;\\
(3) Total compression and zero prediction, $I(X;T)=I(Y;T)=0$.

Note that the identified solutions are all very close to the theoretically-predicted IB-curve.  However, the switch between the 2nd regime (maximal compression possible at maximum prediction) and 3rd regime (total compression and zero prediction) in practice happens at $\beta \approx 0.45$. This is different from the theoretical prediction, which states that this switch should occur at $\beta = 1.0$.  The deviation from the theoretical prediction likely arises due to various practical details of our optimization procedure, as mentioned above. The switch from the 1st regime (no compression) to the 2nd regime happened as soon as $\beta > 0$, as predicted theoretically.

In contrast to the IB Lagrangian, by optimizing the squared-IB functional (\cref{fig:mnistfig}B), we discover solutions located along different points on the IB curve for different values of $\beta$. 


Additional insight is provided by visualizing the bottleneck variable $T\in \mathbb{R}^2$ (i.e., hidden layer activity) for both experiments. This is shown for different $\beta$ 
in the scatter plot insets in \cref{fig:mnistfig}. 
As expected, the IB Lagrangian experiments displayed three types of bottleneck variables: non-compressed variables (regime 1), compressed variables where each of the 10 classes is represented by its own compact cluster (regime 2), and a trivial solution where all activity is collapsed to a single cluster (regime 3). For the squared-IB functional, a different behavior was observed: 
As $\beta$ increases, multiple classes become clustered together and the total number of clusters decreased. 
Thus, nonlinear IB with the squared-IB functional learned to group $X$ into a varying number of clusters, in this way exploring the full trade-off between compression and prediction.




\textbf{Issue 2: \issuetwo}

By maximizing the squared-IB functional, we could find (nearly) IB-optimal solutions along different points of the IB curve.  Here we show that such solutions do not provide particularly useful representations of the input data, supporting the arguments made in \cref{sec:conceptual}.

Note that ``stochastic mixture''-type solutions, in particular the family $T_\alpha$ (\cref{eq:TalphaDef}), are not in our model class, since they cannot be expressed in the form $T = \predfunc(X) + Z$. 
Instead, our implementation  favors ``hard-clusterings'' in which all inputs belonging to a given output class are mapped to a compact, well-separated cluster in the activation space of the hidden layer (note that inputs belonging to multiple output classes may be mapped to a single cluster). 
For instance, \cref{fig:mnistfig}B, shows solutions with 10 clusters ($\beta=0.05$), 6 clusters ($\beta=0.2$), 3 clusters ($\beta=0.5$), and 1 cluster ($\beta=2.0$).
Interestingly, such hard-clusterings are characteristic of optimal solutions for deterministic IB (dIB), as discussed in \cref{app:dib}.
At the same time, in our results, the classes are not clustered in any particularly meaningful or useful way, and clusters contain different combinations of classes for different solutions.
For instance, the solution shown for squared-IB functional with $\beta=0.5$ 
has 3 clusters, one of which contains the classes $\{0,2,3,4,5,6,8,9\}$, another contains the class $1$, and the last contains the class $7$. 
However, in other runs for the same $\beta$ value, different clusterings of the classes arose.  Moreover, because the different classes appear with close-to-uniform frequency in the MNIST dataset, any solution that groups the 10 classes into 3 clusters of size $\{8,1,1\}$ 
will achieve similar values of $I(X;T),I(Y;T)$ as the solution shown for $\beta=0.5$.

\textbf{Issue 3: \issuethree}

For both types of experiments,  runs with $\beta=0$ minimize cross-entropy loss only, without any regularization term that favors compression.  (Such runs are examples of ``vanilla'' supervised learning, though with stochasticity in the mapping between the input and the 2-node hidden layer.) These runs achieved nearly-perfect prediction, so $I(Y;T)\approx H(Y)$. 
However, as shown in the scatter plots in \cref{fig:mnistfig}, these hidden layer activations fell into spread-out clusters, rather than point-like clusters seen for $\beta > 0$.  This shows that hidden layer activity was not compressed, in that it retained information about $X$ that was irrelevant for predicting the class $Y$, and fell onto the flat part of the IB curve.

Recall that our neural network architecture has three hidden layers before  $T$, and one hidden layer after it.  Due to the DPI inequalities 
\cref{eq:layersdpi,eq:layersdpi2}, the earlier hidden layers must have less compression than $T$, while the latter hidden layer must have more compression than $T$.  At the same time, $\beta=0$ runs achieve nearly 0 probability of error on the training dataset (results not shown), meaning that all layers must achieve $I(Y;T)\approx H(Y)$, the maximum possible.  Thus, for $\beta=0$ runs, as in regular supervised learning, the activity of the all hidden layers is located on the flat part of the IB curve, demonstrating  
a lack of a strict trade-off between prediction and compression.



\section{Conclusion}
\label{sec:conclusion}


The information bottleneck principle has attracted a great deal of attention in various fields, including information theory, cognitive science,  and machine learning, particularly in 
the context of classification using neural networks.
In this work, we showed that in any scenario where $Y$ is a deterministic function of $X$ --- which includes many classification problems --- 
IB demonstrates behavior that is qualitatively different from when 
the mapping from $X$ to $Y$ is stochastic. 
In particular, in such cases: (1) the IB curve cannot be recovered by maximizing the IB Lagrangian $I(Y;T) -\beta I(X;T)$ while varying $\beta$; (2) all points on the IB curve  contain ``uninteresting'' representations of inputs;  
 (3) multi-layer classifiers that achieve zero probability of error cannot have a strict trade-off between prediction and compression among successive layers, contrary to a recent proposal. 
Our results should not be taken to mean that the application of IB to supervised learning is without merit. First, they do not apply to various non-deterministic classification problems  where the output is stochastic.  Second, even for deterministic scenarios, one may still wish to control the amount of compression during training, 
e.g., to improve generalization or robustness to adversarial inputs. In this case, however, our work shows that to achieve varying rates of compression, one should use a different objective function than the IB Lagrangian.


\ificlrfinal
\subsubsection*{Acknowledgments}
We would like to thank the Santa Fe Institute for helping to support this research. Artemy Kolchinsky was supported by Grant No. FQXi-RFP-1622 from the FQXi foundation and Grant No. CHE-1648973 from the US National Science Foundation. Brendan D. Tracey was supported by the AFOSR MURI on multi-information sources of multi-physics systems under Award Number FA9550-15-1-0038.
\fi


\bibliographystyle{iclr2019_conference}
\bibliography{central}

\clearpage

\appendix

\renewcommand\thefigure{A\arabic{figure}}    
\renewcommand{\theequation}{A\arabic{equation}}

\begin{figure*}[t]
\begin{tikzpicture}
	\draw (0, 0) node[inner sep=0] {\includegraphics[width=0.9\textwidth]{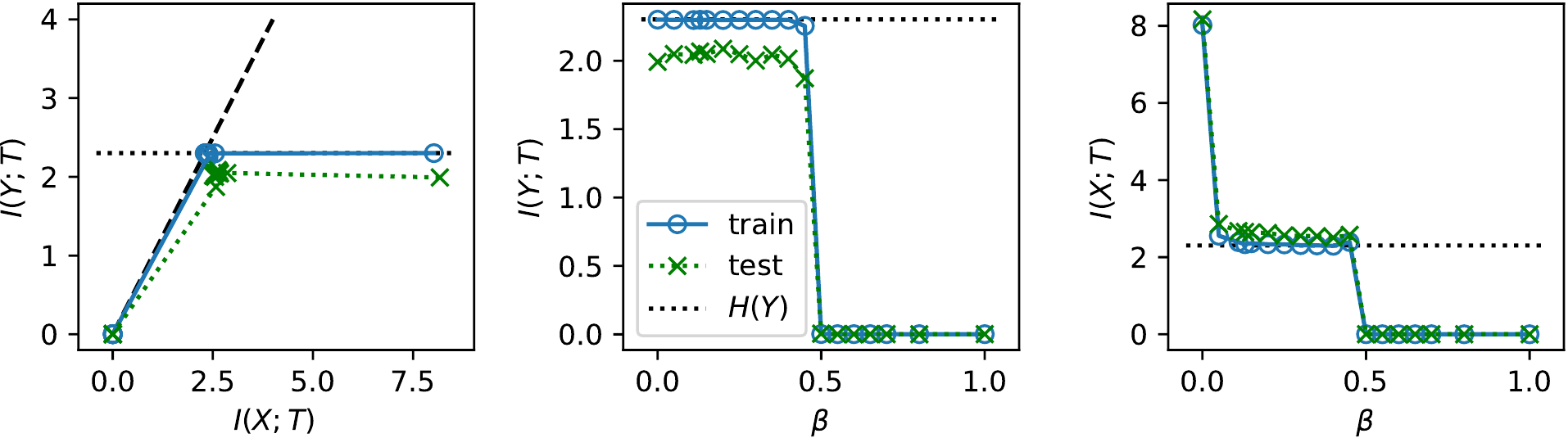}};
    \draw (-0.45\textwidth, 2) node {\bf \Large \textsf{A}};
\end{tikzpicture}
\\
\begin{tikzpicture}
	\draw (0, 0) node[inner sep=0] {\includegraphics[width=0.9\textwidth]{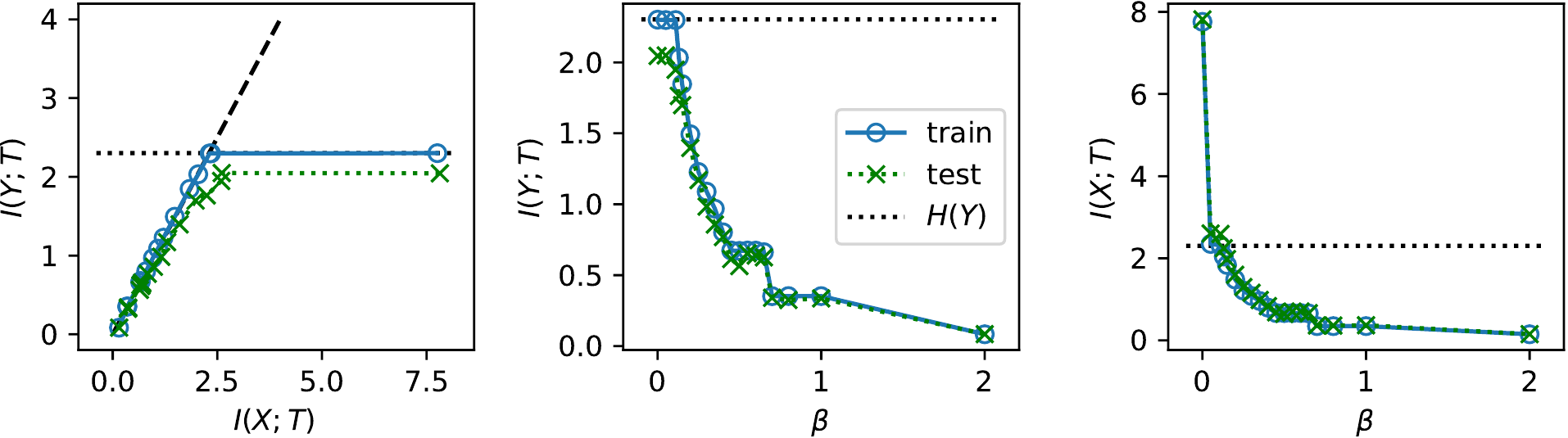}};
    \draw (-0.45\textwidth, 2) node {\bf \Large \textsf{B}};
\end{tikzpicture}
\caption{
The top row (A) shows results for the IB Lagrangian, $I(Y;T) - \beta I(X;T)$, and the bottom row (B) shows results for the squared-IB functional, $I(Y;T) - \beta {I(X;T)}^2$. 
In each row, the left column shows the information plane (compression $I(X;T)$ versus prediction $I(Y;T)$), with the black dashed line showing the DPI bound $I(Y;T)\le I(X;T)$; the middle column shows prediction $I(Y;T)$ as a function of $\beta$; the right column shows compression $I(X;T)$ as a function of $\beta$. 
In all plots, the solid blue line indicates values calculated for the training data set, the dashed green line indicates values calculated for the held-out testing data set, and the black dotted line indicates $H(Y) = \ln 10$.
All information-theoretic quantities are plotted in nats.
\label{fig:appendixIB}
}
\end{figure*}

\section{Details of MNIST experiments}
\label{sec:mnistappendix}

In \cref{sec:mnist}, we demonstrate our results on the MNIST dataset~\citep{lecun1998gradient}. This dataset contains a training set of 60,000 images and a test set of 10,000 images, each labeled according to digit. 
$X\in \mathbb{R}^{784}$ is defined to be a vector of pixels for a single  $28 \times 28$ image, and $Y\in \{0,1,...,9\}$ is defined to be the class label.
Our experiments were carried out using the ``nonlinear IB'' method~\citep{kolchinsky_nonlinear_2017}.
$I(X;T)$ was computed using the kernel-based mutual information upper bound~\citep{kolchinsky2017estimating,kolchinsky_nonlinear_2017} and $I(Y;T)$ was computed using the lower bound $I(Y;T) \ge H(Y)-\mathbb{E}_{\pYjT}\left[ -\log \qYT \right]$ (see \cref{eq:ce-decomp}).

The neural network was 
trained using the Adam algorithm \citep{kingma2014adam} with a mini-batch size of $128$ and a learning rate of $10^{-4}$. Unlike the implementation in \citep{kolchinsky_nonlinear_2017}, the same mini-batch was used to estimate the gradients of both $I_{\theta}(X;T)$ and the cross-entropy term. Training was run for 200 epochs. At the beginning of each epoch, the order of training examples was randomized.
To eliminate the effect of the local minima, for each possible value of $\beta$, we carried out 20 runs and then selected the run that achieved the best value of the objective function.
TensorFlow code is available at \codeurl.

Results for the MNIST dataset are shown in \cref{fig:mnistfig} and \cref{fig:appendixIB}, computed for a range of $\beta$ values.  
\cref{fig:appendixIB} shows results for both training and testing datasets, though the main text focuses exclusively on training data.   It can be seen that while the solutions found by IB Lagrangian jump discontinuously from the ``fully clustered'' solution ($I(X;T)=I(Y;T)=H(Y)$) to the trivial solution ($I(X;T)=I(Y;T)=0$), solutions found by the squared-IB functional explore the trade-off in a continuous manner.  See figure captions for details. 

\section{Deterministic Information Bottleneck}
\label{app:dib}

Here we show that our analysis also applies to a recently-proposed~\citep{strouse_deterministic_2017} variant of IB called \emph{deterministic IB (dIB)}.
dIB replaces the standard IB compression cost, $I(X;T)$, with the entropy of the bottleneck variable, $H(T)$.  This can be interpreted as operationalizing  compression costs via a source-coding, rather than a channel-coding, scenario.


Formally, in dIB one is given random variables $X$ and $Y$. One then identifies bottleneck variables $T$ that obey the Markov condition $Y-X-T$ and maximize the \emph{dIB Lagrangian},
\begin{equation}
\LdIB(T):=I(Y;T) - \beta H(T)
\label{eq:dIBlagrangian}
\end{equation}
for $\beta\in[0,1]$, which can be considered as a relaxation of the constrained optimization problem
\begin{equation}
 \Fdib(r) := \max_{T\in\Delta} I(Y;T) \quad \text{s.t.} \quad H(T) \le r \,.
\label{eq:dIBconstrained}
\end{equation}
To guarantee that the compression cost is well-defined, $T$ is typically assumed to be discrete-valued~\citep{strouse_deterministic_2017}.
We call $ \Fdib$ the \emph{dIB curve}.  

Before proceeding, we note that the inequality constraint in the definition of $\Fdib(r)$ can be replaced by an equality constraint,
\begin{equation}
\Fdib(r) := \max_{T\in\Delta} I(Y;T) \quad \text{s.t.} \quad H(T) = r
\label{eq:dIBequality}
\end{equation}
We do so by showing that $\Fdib(r)$ is monotonically increasing in $r$.  Consider any $T$ which maximizes $I(Y;T)$ subject to the constraint $H(T) = r$, and obeys the Markov condition $Y - X - T$.  Now imagine some random variable $D$ which obeys the Markov condition $Y-X-T-D$, and define a new bottleneck variable $T' := (T, D)$ (i.e., the joint outcome of $T$ and $D$).  We have
\[
I(Y;T') = I(Y;T,D) = I(Y;T) + I(Y;D\vert T) = I(Y;T) \,,
\]
where we've used the chain rule for mutual information. At the same time, $D$ can always be chosen so that $H(T')=H(T,D) = r'$ for any $r' \ge r$. Thus, we have shown that there are always random variables $T'$ that achieve at least $I(Y;T') = \max_{T:H(T)=r} I(Y;T)$ and have $H(T') > r$, meaning that  
$\Fdib(r)$ is monotonically increasing in $r$.  This means the inequality constraint in \cref{eq:dIBconstrained} can be replaced with an equality constraint.

As we will see, unlike the standard IB curve, $\Fdib$ is not necessarily concave. Since $\Fdib$ can be defined using equality constraints, one can rewrite maximization of $\LdIB$  as $\max_T I(Y;T) - \beta H(T) = \max_r \Fdib(r) - \beta r$, the Legendre-Fenchel transform of $-\Fdib(r)$.  By properties of the Legendre-Fenchel transform, the  optimizers of $\LdIB$
 must lie on the concave envelope of $\Fdib$, which we indicate as $\Fdib^*$.

\subsection{The dIB curve when \texorpdfstring{$Y$}{Y} is a deterministic function of \texorpdfstring{$X$}{X}}
\label{app:dibsec1}




As in standard IB, for a discrete-valued $Y$ we have the inequality
\begin{align}
I(Y;T) \le H(Y) \,.
\label{eq:dIBinequality0}
\end{align}
However, instead of the standard IB inequality $I(Y;T) \le I(X;T)$, we now employ 
\begin{equation}
I(Y;T) \le H(T)  \,,
\label{eq:dIBinequality}
\end{equation}
which makes use of the assumption that $T$ is discrete-valued.  The dIB curve will have the same bounds as those shown for the standard IB curve (\cref{fig:ibcurveschematic}), except that $H(T)$ replaces $I(X;T)$ on the  horizontal axis.

Now consider the case where $Y$ is a deterministic function of $X$, i.e., $Y=f(X)$. It is easy to check that $\Tcopy:=f(X)=Y$ achieves equality for both \cref{eq:dIBinequality0,eq:dIBinequality}, and thus lies on the dIB curve. 
Since $\Fdib$ is monotonically increasing, the dIB curve is flat and achieves $I(Y;T)=H(Y)$ for all $H(T)\ge H(Y)$.

We now consider the increasing part of the curve, $H(T) \in [0,H(Y)]$. We call any $T$ which is a deterministic function of $Y$ (that is, any $T=g(Y)=g(f(X))$, where $g$ is any deterministic function) a ``hard-clustering'' of $Y$.  
Any hard-clustering of $Y$ has $H(T)\le H(Y)$.  At the same time, any hard-clustering of $Y$ has $H(T\vert Y)=0$, thus $I(Y;T)=H(T)-H(T\vert Y)=H(T)$, achieving the bound of \cref{eq:dIBinequality}.  Thus, any hard-clustering of $Y$ 
lies on the increasing part of the dIB curve.  Note that any hard-clustering of $Y$ will also be a deterministic function of $X$, thus have $H(T\vert X)=0$ and $I(X;T)=H(T)-H(T\vert X)=H(T) = I(Y;T)$, and will therefore also fall on the increasing part of the standard IB curve.

\begin{wrapfigure}{R}{2.25in}
\vspace{-5pt}
\centering
\includegraphics[width=2.25in]{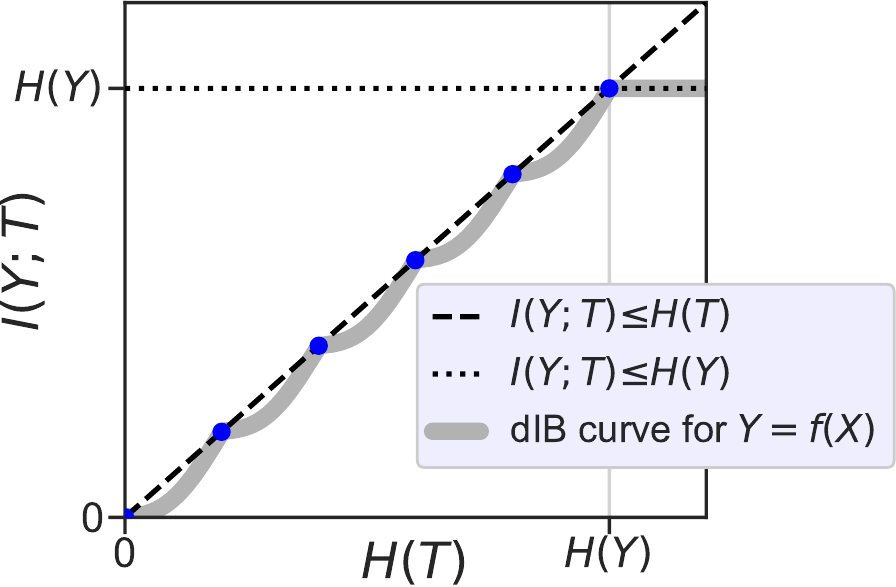}
\caption{
A schematic of the dIB curve.  Dashed line is the bound $I(Y;T) \le H(T)$, dotted line is $I(Y;T) \le H(Y)$. When $Y$ is a deterministic function of $X$, the dIB curve saturate the second bound always. Furthermore, it also saturates the first bound for any $T$ that is a deterministic function of $Y$. The qualitative shape of the resulting dIB curve is shown as thick gray line.\label{fig:ibcurveschematic_dIB}}
\vspace{-20pt}
\end{wrapfigure}

At the same time, the dIB curve cannot be composed entirely of hard-clustering, since --- under the assumption that $Y$ is discrete-valued --- there can only be a countable number of $T$'s that are hard-clusterings of $Y$. Thus, the dIB curve must also contain bottleneck variables that are not deterministic functions of $Y$, thus have $H(T\vert Y)>0$ and $I(Y;T)<H(T)$, and do not achieve the bound of \cref{eq:dIBinequality}. Geometrically-speaking, when $Y$ is a deterministic function of $X$, the dIB curve must have a ``step-like'' structure over $H(T) \in [0,H(Y)]$, rather than increasing smoothly like the standard IB curve.
These results are shown schematically in \cref{fig:ibcurveschematic_dIB}, where blue dots indicate hard clusters of $Y$.

As mentioned, optimizers of $\LdIB$ must lie on the concave envelope of $\Fdib$, indicated by $\Fdib^*$.  Clearly, the step-like dIB curve that occurs when $Y$ is a deterministic function of $X$ is not concave, and only hard-clusterings of $Y$ lie on its concave envelope for $H(T) \in [0,H(Y)]$.  In the  analysis below, we will generally concern ourselves with optimizers which are hard-clusterings. 

\subsection{The three caveats}


We now briefly consider the three issues discussed in the main text, in the context of dIB. As usual, we assume that $Y$ is a deterministic function of $X$.

\textbf{Issue 1: \issueoneDIB}

In analogy to the analysis done in \cref{sec:practical}, we use the inequalities \cref{eq:dIBinequality0,eq:dIBinequality} to bound the dIB Lagrangian as
\begin{align*}
\LdIB(T)  =I(Y;T) - \beta H(T) \le (1-\beta)I(Y;T) \le (1 - \beta)H(Y) \,.
\end{align*}
Now consider the bottleneck variable $\Tcopy$, for which 
$\LdIB(\Tcopy)=(1-\beta)H(Y)$.
Therefore, $\Tcopy$ (or any one-to-one transformation of $\Tcopy$) will maximize $\LdIB$ for all $\beta\in [0,1]$. 
It is also  straightforward to show that when $\beta=0$, all bottleneck variables residing on the flat part of the dIB curve will  simultaneously optimize the dIB Lagrangian.  Similarly, one can show that all hard-clusterings of $Y$, which achieve the bound of \cref{eq:dIBinequality}, will simultaneously optimize the dIB Lagrangian for $\beta=1$.  As before, this means that
there is no one-to-one map between points on the dIB curve and optimizers of the dIB Lagrangian for different $\beta$.

As in \cref{sec:practical}, we propose to resolve this problem by maximizing an alternative objective function, which we call the \emph{squared-dIB functional},
\begin{equation}
\LsqdIB(T):=I(Y;T) - \beta {H(T)}^{2} \,. \label{eq:sqdIB}
\end{equation}

\begin{figure}
\includegraphics[width=4in]{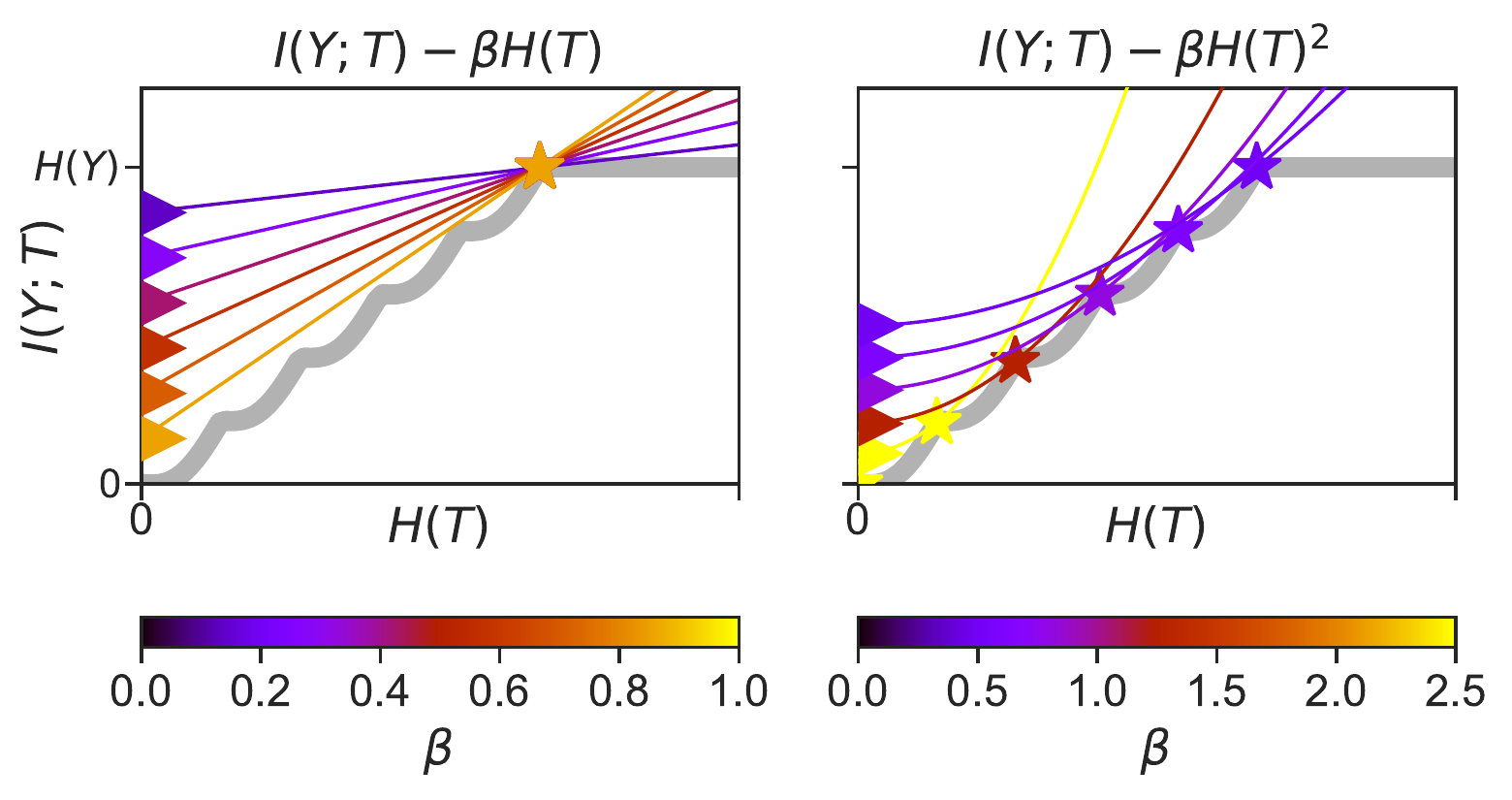}
\caption{
\textbf{Success of squared-dIB functional}. 
Colored lines indicate manifolds which have the same value of the dIB Lagrangian (left) and the squared-dIB functional (right) for
different values of $\beta$,
 the stars indicate achievable  $\langle H(T),I(Y;T)\rangle$ that maximize each function. For the dIB curve when $Y=f(X)$, different $\beta$ values recover different solutions only for the squared-dIB functional.\label{fig:modifieddIB}%
}
\end{figure}

We first demonstrate that any optimizer of the dIB Lagrangian must also be an optimizer of the squared-dIB functional.  Consider that maximization of $\LdIB$ can be written as $\max_T I(Y;T) - \beta H(T) = \max_r \Fdib^*(r) - r$. 
 Then, for the point $\langle r, \Fdib^*(r)\rangle$ on the dIB curve to maximize $\LdIB$, it must have $\beta \in \partial_r \Fdib^*(r)$, where $\partial_r$ indicates the superderivative with regard to $r$.
 At the same time, for the point  $\langle r, \Fdib^*(r)\rangle$ to maximize $\LsqdIB$, it must satisfy $0 \in \partial_r \left[ \Fdib^*(r) - \beta r^2\right]$, or after rearranging, 
 \begin{align}
2r \beta \in \partial_r \Fdib^*(r)\,.
 \label{eq:dibcond2}
 \end{align}
 It is easy to see that if $\beta \in \partial_r \Fdib^*(r)$ is satisfied, then \cref{eq:dibcond2} is also satisfied under the transformation $\beta \mapsto \frac{\beta}{2r}$.  Thus, any optimizer of $\LdIB$ will also optimize $\LsqdIB$, given $\beta \mapsto \frac{\beta}{2r}$.

We now show that different hard-clusterings of $Y$ will optimize the squared-dIB functional for different values of $\beta$, meaning that we can explore the envelope of the dIB curve by optimizing $\LsqdIB$ while varying $\beta$. Formally, 
we show that for any given $\beta>0$, the point 
\begin{align}
\label{eq:dibcond3}
\langle H(T), I(Y;T) \rangle = \left\langle \frac{1}{2\beta}, \frac{1}{2\beta}\right\rangle
\end{align}
on the dIB curve will be a unique maximizer of $\LsqdIB$ for the corresponding $\beta$.
Consider the value of $\LsqdIB$ for any $T$ satisfying \cref{eq:dibcond3},
\begin{align*}
\LsqdIB(T) = \frac{1}{2\beta} - \beta \left(\frac{1}{2\beta}\right)^2 =  \frac{1}{2\beta} - \frac{1}{4\beta} = \frac{1}{4 \beta}
\end{align*}
Now consider the value of of $\LsqdIB$ for any other $T'$ on the dIB curve which has  $H(T') \ne \frac{1}{2\beta}$,
\begin{align*}
\LsqdIB(T')  &=I(Y;T') - \beta H(T')^2 \\
&= I(Y;T') - \beta \left(\left(H(T')-\frac{1}{2\beta}\right)^2 + \frac{H(T')}{\beta} - \frac{1}{4\beta^2}\right) \\
& \stackrel{(a)}{<} (I(Y;T')-H(T')) + \frac{1}{4 \beta}\\
& \stackrel{(b)}{\le} \frac{1}{4 \beta} = \LsqdIB(T) \,.
\end{align*}
Inequality $(a)$ comes from the assumption that $H(T')\ne \frac{1}{2\beta}$, thus $\left(H(T')-1/({2\beta})\right)^2>0$, while inequality $(b)$ comes from the bound $I(Y;T')\le H(T')$.   We have shown that $\LsqdIB(T') < \LsqdIB(T)$, meaning that any point on the dIB curve satisfying \cref{eq:dibcond3} for a given $\beta$, assuming it exists, will be the unique maximizer of $\LsqdIB$.  The situation is diagrammed visually in \cref{fig:modifieddIB}.


\textbf{Issue 2: \issuetwoDIB}

The family of bottleneck variables $T_\alpha$ defined in \cref{eq:TalphaDef}, i.e., the mixture of two trivial solutions, are no longer optimal from the point of view of dIB.  However, as mentioned above, any $T$ that is a hard-clustering of $Y$ achieves the bound of \cref{eq:dIBinequality}, and is thus on the dIB curve.

However, given a hard-clustering of $Y$, there is no reason for the clusters to obey any intuitions about semantic or perceptual similarity between grouped-together classes.
To use the ImageNet example from \cref{sec:conceptual}, there is no reason for dIB to prefer a coarse-graining with ``natural'' groups like $\{\classtexttt{border collie},\classtexttt{golden retriever}\}$ and $\{\classtexttt{teapot},\classtexttt{coffeepot}\}$, rather than a coarse-graining with groups like
$\{\classtexttt{border collie},\classtexttt{teapot}\}$ and $\{\classtexttt{golden retriever},\classtexttt{coffeepot}\}$, assuming those classes are of the same size.  
Distinguishing between such different clusterings requires some similarity or distortion measure between classes, which is not provided by the standard information theoretic measures.


\textbf{Issue 3: \issuethree}

In \cref{sec:conceptualtradeoff}, we showed that for a neural network with many hidden layers and zero probability of error, the activity of the different layers will lie along the flat part of the standard IB curve, where there is no strict trade-off between compression $I(X;T)$ and prediction $I(Y;T)$.  Note that any bottleneck variable that lies along the flat part of a standard IB curve will have $I(X;T)\ge H(Y)$ and $I(Y;T) = H(Y)$.  Using the standard information-theoretic inequality $H(T) \ge I(X;T)$, the same bottleneck variable must therefore have $H(T) \ge H(Y)$ and $I(Y;T) = H(Y)$, thus also lying on the flat part of the dIB curve.  Thus, for a neural network with many hidden layers and zero probability of error, the activity of the different layers will also lie along the flat part of the dIB curve, and not demonstrate any strict trade-off between compression $H(T)$ and prediction $I(Y;T)$.

\section{Caveats when $Y$ is approximately a deterministic function of $X$}
\label{app:approx}

\newcommand{\pXY}{p_{XY}}
\newcommand{\tp}{\tilde{p}}
\newcommand{\tpXY}{\tp_{XY}}
\newcommand{\tT}{\tilde{T}}
\newcommand{\tF}{\tilde{F}}

\newcommand{\Z}{Z}
\newcommand{\z}{z}
\newcommand{\pZY}{p_{{\Z}Y}}
\newcommand{\tpZY}{\tp_{{\Z}Y}}

In this Appendix, we show that when $Y$ is a small perturbation away from being a deterministic function of $X$, the three caveats discussed in the main text persist in an approximate manner. 

To derive our results, we first prove several useful theorems.  In our proofs, we make central use of Thm. 17.3.3 from \citet{cover_elements_2012}, which states that, for two distributions $a$ and $b$ over the same finite set of outcomes $\sY$, if the $\ell_1$ distance is bounded as $|a - b|_1\le \epsilon \le \frac{1}{2}$, then $| H(a) - H(b)| \le -\epsilon \log \frac{\epsilon}{\cardY}$. We will also use the weaker bound $|H(a)-H(b)|\le \log \cardY$, which is based on the maximum and minimum entropy for any distribution over outcomes $\sY$.

\begin{theorem}
\label{thm:1}
Let $\Z$ be a random variable (continuous or discrete), and $Y$ a random variable with a finite set of outcomes $\sY$. Consider two joint distributions over $\Z$ and $Y$, $\pZY$ and $\tpZY$, which have the same marginal over $\Z$, $p(\z) = \tp(\z)$, and obey $\left|\pZY - \tpZY\right|_{1}\le\epsilon\le\frac{1}{2}$. 
Then, $$\left|H(p(Y\vert \Z))-H(\tp(Y\vert \Z))\right|\le-\epsilon\log\frac{\epsilon}{{\cardY}^3}\,.$$
\end{theorem}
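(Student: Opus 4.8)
The plan is to reduce the statement to the scalar entropy--continuity bound recalled above (namely $|H(a)-H(b)|\le -\epsilon'\log(\epsilon'/\cardY)$ whenever $|a-b|_1=\epsilon'\le\tfrac12$), applied separately at each value of $\Z$, and then to control the average over $\Z$. Write $p(\z)$ for the common marginal on $\Z$ and set $\epsilon_\z:=\left|p(Y\vert\Z=\z)-\tp(Y\vert\Z=\z)\right|_1$. First I would record two elementary facts. Since the two joint distributions share the same marginal on $\Z$, we have $H(p(Y\vert\Z))-H(\tp(Y\vert\Z))=\mathbb{E}_{p(\z)}\!\left[H(p(Y\vert\z))-H(\tp(Y\vert\z))\right]$, so by the triangle inequality the quantity to be bounded is at most $\mathbb{E}_{p(\z)}[\Delta_\z]$, where $\Delta_\z:=\left|H(p(Y\vert\z))-H(\tp(Y\vert\z))\right|$. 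Also, again using the equal marginals, $\mathbb{E}_{p(\z)}[\epsilon_\z]=\left|\pZY-\tpZY\right|_1\le\epsilon$.

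Next I would split the $\Z$-space into $A=\{\z:\epsilon_\z\le\tfrac12\}$ and $B=\{\z:\epsilon_\z>\tfrac12\}$. On $A$ the scalar bound gives $\Delta_\z\le g(\epsilon_\z)$ with $g(t):=-t\log(t/\cardY)$; on $B$ I use only the crude estimate $\Delta_\z\le\log\cardY$, valid because every entropy over $\sY$ lies in $[0,\log\cardY]$. Hence $\mathbb{E}_{p(\z)}[\Delta_\z]\le\int_A p(\z)\,g(\epsilon_\z)\,d\z+\Pr(B)\log\cardY$. For the first term I would use that $g$ is concave on $[0,2]$ (since $g''(t)=-1/t<0$), nonnegative there (because $\cardY\ge2$, so $\epsilon_\z\le2\le\cardY$), and increasing on $[0,\tfrac12]$ (since $\tfrac12<\cardY/e$). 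Therefore, enlarging the domain from $A$ to all of $\Z$-space (legitimate as $g\ge0$) and applying Jensen's inequality, $\int_A p(\z)\,g(\epsilon_\z)\,d\z\le\mathbb{E}_{p(\z)}[g(\epsilon_\z)]\le g\!\left(\mathbb{E}_{p(\z)}[\epsilon_\z]\right)\le g(\epsilon)$, where the last step uses $\mathbb{E}_{p(\z)}[\epsilon_\z]\le\epsilon\le\tfrac12$ and monotonicity. For the second term, Markov's inequality gives $\Pr(B)=\Pr(\epsilon_\z>\tfrac12)\le2\,\mathbb{E}_{p(\z)}[\epsilon_\z]\le2\epsilon$. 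Combining, the left-hand side is at most $-\epsilon\log(\epsilon/\cardY)+2\epsilon\log\cardY=-\epsilon\log\epsilon+3\epsilon\log\cardY=-\epsilon\log(\epsilon/{\cardY}^{3})$, which is the claim.

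The one genuinely delicate point is the set of $\Z$-values where $\epsilon_\z$ exceeds $\tfrac12$, since the scalar entropy bound fails there: one must fall back on the trivial estimate $\Delta_\z\le\log\cardY$ and pay for the probability of that event via Markov's inequality. This is exactly what inflates the single $\cardY$ in the scalar bound to $\cardY^{3}$: one extra $\log\cardY$ from the crude entropy bound, multiplied by the factor $2=1/(1/2)$ from Markov. Besides this, I would only need to check the harmless edge cases (if $\cardY=1$ both sides vanish) and that all conditional entropies in sight are finite, which holds automatically since $Y$ has finitely many outcomes.
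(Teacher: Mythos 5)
Your proof is correct and follows essentially the same route as the paper's: a pointwise application of the $\ell_1$--entropy continuity bound, a split of the $\Z$-space into $\{\z:\epsilon_\z\le\tfrac12\}$ and its complement, and the crude $\log\cardY$ bound combined with Markov's inequality on the latter, yielding the same $\cardY^3$. The only cosmetic difference is that you control $\mathbb{E}_{p(\z)}[g(\epsilon_\z)]$ with $g(t)=-t\log(t/\cardY)$ via Jensen's inequality and concavity, whereas the paper introduces the tilted distribution $q(\z)\propto p(\z)\,\epsilon_\z$ and invokes non-negativity of $\DKL(q\Vert p)$ --- these are the identical estimate, since the Jensen gap of $-t\log t$ is exactly that KL divergence.
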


\begin{proof}
For this proof, we will assume that $\Z$ is a continuous-valued.  In case it is discrete-valued, the below proof applies after replacing all integrals over the outcomes of $\Z$ with summations.

Let $\hat{\epsilon}:=\left|\pZY - \tpZY \right|_{1} = \int \sum_{y}\left|\pZY(\z,y)-\tpZY(\z,y)\right|d\z$  
be the actual $\ell_{1}$ distance between $\pZY$ and $\tpZY$,
and note that $\hat{\epsilon}\le\epsilon$ by assumption.
Without loss of generality, we assume that $\hat{\epsilon} > 0$. Then, define the following probability distribution,
\[
q(\z) := \frac{1}{\hat{\epsilon}} p(\z) \left| p_{Y\vert \Z=\z} - \tp_{Y\vert \Z=\z} \right| = \frac{1}{\hat{\epsilon}} p(\z) \sum_{y}\left|p(y\vert \z) - \tp(y\vert \z) \right| \,.
\]
Note that $q(\z)$ integrates to 1 and is absolutely continuous with respect to $p(\z)$ ($p(\z) = 0$ implies $q(\z) = 0$), and that 
 $\left| p_{Y\vert \Z=\z} - \tp_{Y\vert \Z=\z} \right| = \hat{\epsilon}\frac{q(\z)}{ p(\z)}$.
When then have 
\[
\left|H(p(Y\vert \z))-H(\tp(Y\vert \z))\right|\le \begin{cases}
-\hat{\epsilon}\frac{q(\z)}{p(\z)}\log\frac{\hat{\epsilon}}{\cardY}\frac{q(\z)}{p(\z)} & \text{if }\hat{\epsilon}\frac{q(\z)}{p(\z)}\le \frac{1}{2}\\
\log \cardY & \text{otherwise}
\end{cases}
\,,
\]
where the first case follows from \cite[Thm. 17.3.3]{cover_elements_2012}, and the second case by considering minimum vs. maximum possible entropy values.

We now bound the difference in conditional entropies, using $\left[ \cdot \right]$ to indicate the Iverson bracket,
\begin{align}
& H(p(Y\vert \Z))-H(\tp(Y\vert \Z)) =\int p(\z)\left[H(p(Y\vert \z))-H(\tp(Y\vert \z))\right]\;d\z  \le \nonumber \\
 & \int p(\z) \left[\hat{\epsilon}\frac{q(\z)}{p(\z)} \le \frac{1}{2}\right] \left(-\hat{\epsilon}\frac{q(\z)}{p(\z)}\log\frac{\hat{\epsilon}}{\cardY}\frac{q(\z)}{p(\z)}\right)d\z + \log \cardY  \int p(\z) \left[\hat{\epsilon}\frac{q(\z)}{p(\z)} > \frac{1}{2} \right] \;d\z\,. \label{eq:lastline}
 \end{align}

We upper bound the first integral in \cref{eq:lastline} as
 \begin{align}
 & \int p(\z) \left[\hat{\epsilon}\frac{q(\z)}{p(\z)} \le \frac{1}{2}\right] \;\left(-\hat{\epsilon}\frac{q(\z)}{p(\z)}\log\frac{\hat{\epsilon}}{\cardY}\frac{q(\z)}{p(\z)}\right)\;d\z\nonumber \\
& \le \int p(\z) \left(-\hat{\epsilon}\frac{q(\z)}{p(\z)}\log\frac{\hat{\epsilon}}{\cardY}\frac{q(\z)}{p(\z)}\right)\;d\z \label{eq:drop} \\
 & =-\hat{\epsilon}\int q(\z)\left(\log\frac{\hat{\epsilon}}{\cardY}+\log\frac{q(\z)}{p(\z)}\right) d\z \nonumber \\
 & =-\hat{\epsilon}\log\frac{\hat{\epsilon}}{\cardY}-\hat{\epsilon}\DKL(q(\Z)\Vert p(\Z))\label{eq:kl0} \le-\hat{\epsilon}\log\frac{\hat{\epsilon}}{\cardY} 
\end{align}
where in \cref{eq:drop} we dropped the Iverson bracket, in \cref{eq:kl0} we used the definition of KL divergence, and then its non-negativity. 

To upper bound the second integral in \cref{eq:lastline}, observe that
\begin{align}
\int p(\z) \left[\hat{\epsilon}\frac{q(\z)}{p(\z)} > \frac{1}{2} \right] \;d\z & =
\int p(\z) \left[\left| p_{Y\vert \Z=\z} - \tp_{Y\vert \Z=\z} \right| > \frac{1}{2} \right] \;dx \nonumber \\
& \le \int p(\z) \; 2\left| p_{Y\vert \Z=\z} - \tp_{Y\vert \Z=\z} \right| \left[\left| p_{Y\vert \Z=\z} - \tp_{Y\vert \Z=\z} \right| > \frac{1}{2} \right] \;d\z \nonumber \\
& \le 2 \int p(\z) \left| p_{Y\vert \Z=\z} - \tp_{Y\vert \Z=\z} \right|  \;d\z  \le 2 \hat{\epsilon} \label{eq:b2}
\end{align}

Combining \cref{eq:lastline,eq:kl0,eq:b2} gives 
\begin{align*}
H(p(Y\vert \Z))-H(\tp(Y\vert \Z)) \le -\hat{\epsilon}\log\frac{\hat{\epsilon}}{\cardY} + 2\hat{\epsilon} \log \cardY = -\hat{\epsilon}\log\frac{\hat{\epsilon}}{{\cardY}^3}
\end{align*}
The theorem follows by noting that $-\hat{\epsilon}\log \hat{\epsilon}$ is monotonically increasing for $\hat{\epsilon} \le \frac{1}{2}$, and by assumption $\hat{\epsilon} \le \epsilon \le \frac{1}{2}$.
\end{proof}

In the following theorems, we use the notation $I_q(\Z;Y)$ to indicate mutual information between $\Z$ and $Y$ evaluated under some joint distribution $q$ over $\Z$ and $Y$.

\begin{theorem}
Let $\Z$ be a random variable (continuous or discrete), and $Y$ a random variable with a finite set of outcomes. Consider two joint distribution over $X\times Y$, $\pXY$ and $\tpXY$, which have the same marginal over $\Z$, $p(\z) = \tp(\z)$, and obey $\left|\pZY - \tpZY\right|_{1}\le\epsilon\le\frac{1}{2}$. Then,
\[
\left|I_p(\Z;Y)-I_{\tp}(\Z;Y)\right|\le -2\epsilon\log\frac{\epsilon}{\cardY^2}\,.
\]\label{thm:midiff}
\end{theorem}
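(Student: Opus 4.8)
The plan is to decompose the mutual information as $I(Z;Y) = H(Y) - H(Y\vert Z)$ and to bound the two resulting entropy differences separately, then recombine via the triangle inequality. The conditional-entropy difference is already controlled by \cref{thm:1}, so the only genuinely new work is to handle the marginal-entropy difference $\left|H_p(Y) - H_{\tilde p}(Y)\right|$.

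First I would observe that marginalization over $\Z$ does not increase the $\ell_1$ distance:
\[
\sum_{y}\left|p(y) - \tp(y)\right| = \sum_{y}\left|\int\bigl(\pZY(\z,y) - \tpZY(\z,y)\bigr)\,d\z\right| \le \int\sum_{y}\left|\pZY(\z,y) - \tpZY(\z,y)\right|\,d\z \le \epsilon\,,
\]
(with sums replacing integrals if $\Z$ is discrete). Hence the $Y$-marginals of $p$ and $\tp$ are within $\epsilon \le \tfrac12$ in $\ell_1$, and \cite[Thm. 17.3.3]{cover_elements_2012} gives $\left|H_p(Y) - H_{\tilde p}(Y)\right| \le -\epsilon\log\frac{\epsilon}{\cardY}$. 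Meanwhile, the hypotheses of \cref{thm:1} hold verbatim (same $\Z$-marginal, $\ell_1$ distance at most $\epsilon \le \tfrac12$), so $\left|H_p(Y\vert \Z) - H_{\tilde p}(Y\vert \Z)\right| \le -\epsilon\log\frac{\epsilon}{\cardY^3}$.

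Combining the two bounds with the triangle inequality yields
\[
\left|I_p(\Z;Y) - I_{\tilde p}(\Z;Y)\right| \le -\epsilon\log\frac{\epsilon}{\cardY} - \epsilon\log\frac{\epsilon}{\cardY^3} = -\epsilon\log\frac{\epsilon^2}{\cardY^4} = -2\epsilon\log\frac{\epsilon}{\cardY^2}\,,
\]
which is the claimed estimate. I do not expect any real obstacle here: the result is essentially a bookkeeping corollary of \cref{thm:1}, with the only mildly delicate point being the non-expansiveness of $\ell_1$ distance under marginalization, which is the standard ``data processing'' property of total variation and follows immediately from the triangle inequality.
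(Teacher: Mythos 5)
Your proposal is correct and follows essentially the same route as the paper's own proof: both bound the $Y$-marginal entropy difference via the non-expansiveness of $\ell_1$ under marginalization together with Thm.~17.3.3 of Cover--Thomas, invoke the preceding theorem for the conditional-entropy difference, and combine by the triangle inequality with identical arithmetic. No gaps.
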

\begin{proof}
We first bound the $\ell_{1}$ distance between $p_{Y}$ and
$\tp_Y$, 
\begin{multline*}
\left|p_{Y}-\tp_Y \right|_{1} =\sum_{}\left|p_{Y}(y)-\tp_Y(y)\right|
 =\sum_{y}\left|\int\left(\pZY(\z,y)-\tpXY(\z,y)\right)\;d\z\right|\\
 \le\sum_{y}\int\left|\pZY(\z,y)-\tpZY(\z,y)\right|\;d\z = \left|\pZY - \tpZY\right|_{1} \le\epsilon
\end{multline*}
By \cite[Thm. 17.3.3]{cover_elements_2012}, 
\begin{equation}
\left|H(p(Y))-H(\tp(Y))\right|\le-\epsilon\log\frac{\epsilon}{\cardY}\,.\label{eq:entdiff2}
\end{equation}
We now bound the magnitude of the difference of mutual informations 
as
\begin{align*}
\left|I_p(\Z;Y)-I_{\tp}(\Z;Y)\right| & =\left|H(p(Y))-H(\tp(Y))-(H(p(Y\vert \Z))-H(\tp(Y\vert \Z)))\right|\\
 & \le\left|H(p(Y))-H(\tp(Y))\right|+\left|H(p(Y\vert \Z))-H(\tp(Y\vert \Z))\right|\\
 & \le-\epsilon\log\frac{\epsilon}{\cardY} - \epsilon\log\frac{\epsilon}{\cardY^3} = -2\epsilon\log\frac{\epsilon}{\cardY^2}\,,
\end{align*}
where in the last line we've used \cref{eq:entdiff2} and \cref{thm:1}.
\end{proof}

\begin{theorem}
Let $X$ be a random variable (continuous or discrete), and $Y$ a random variable with a finite set of outcomes $\sY$. Consider two joint distribution over $X$ and $Y$, $\pXY$ and $\tpXY$, which have the same marginal over $X$, $p(x) = \tp(x)$ and obey $\left|\pXY - \tpXY\right|_{1}\le\epsilon\le\frac{1}{2}$.
Let $F(r)$ and $\tF(r)$ indicate the IB curves for $\pXY$ and $\tpXY$ respectively, as defined in \cref{eq:constrainedproblem}. 
Then for any $r>0$, $$|F(r) - \tF(r)| \le-2\epsilon\log\frac{\epsilon}{{\cardY}^2}\,.$$
\label{thm:FRdiff}
\end{theorem}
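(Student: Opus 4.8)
The plan is to establish the two one-sided estimates $\tF(r)\ge F(r)+2\epsilon\log\frac{\epsilon}{\cardY^2}$ and $F(r)\ge \tF(r)+2\epsilon\log\frac{\epsilon}{\cardY^2}$ (the right-hand quantity is $\le 0$, so each says one curve cannot lie much below the other). Since the hypotheses on $\pXY$ and $\tpXY$ are symmetric, it suffices to prove the first and then interchange the two distributions. The main tool is \cref{thm:midiff}, applied with the bottleneck variable in the role of $\Z$; the content is in arranging that its hypotheses hold.

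First I would fix $r>0$ and take a bottleneck variable $T$ attaining $F(r)$, i.e.\ $T$ obeys $Y-X-T$ under $\pXY$, satisfies $I_p(X;T)\le r$, and has $I_p(Y;T)=F(r)$. (If the supremum is not attained, replace $T$ by an $\eta$-optimal bottleneck variable and send $\eta\to 0$ at the end; this changes nothing below.) The variable $T$ is specified by its channel $p(t\mid x)$, and the Markov condition says the joint under $\pXY$ factorizes as $p(x,y)\,p(t\mid x)$.

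The heart of the argument is a transport step: define a bottleneck variable $\tT$ for $\tpXY$ using the \emph{same} channel, so that the joint under $\tpXY$ is $\tp(x,y,t):=\tp(x,y)\,p(t\mid x)$. Then $\tp(t\mid x,y)=p(t\mid x)$ does not depend on $y$, so $\tT$ obeys $Y-X-\tT$ under $\tpXY$. Because $p(x)=\tp(x)$, the $(X,T)$-marginals of the two joints coincide, hence $I_{\tp}(X;\tT)=I_p(X;T)\le r$, i.e.\ $\tT$ is feasible for $\tpXY$ at rate $r$; likewise the $T$-marginals coincide, $p(t)=\tp(t)$. Finally, by the triangle inequality and $\sum_t p(t\mid x)=1$ (an integral if $T$ is continuous-valued),
\[
\left|p_{TY}-\tp_{TY}\right|_1\;\le\;\sum_{x,y}\left|p(x,y)-\tp(x,y)\right|\sum_{t}p(t\mid x)\;=\;\left|\pXY-\tpXY\right|_1\;\le\;\epsilon\,.
\]
Thus $(p_{TY},\tp_{TY})$ satisfies the hypotheses of \cref{thm:midiff} with $\Z=T$, which gives $\left|I_p(Y;T)-I_{\tp}(Y;\tT)\right|\le -2\epsilon\log\frac{\epsilon}{\cardY^2}$. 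Since $\tT$ is feasible for $\tpXY$ at rate $r$,
\[
\tF(r)\;\ge\;I_{\tp}(Y;\tT)\;\ge\;I_p(Y;T)+2\epsilon\log\frac{\epsilon}{\cardY^2}\;=\;F(r)+2\epsilon\log\frac{\epsilon}{\cardY^2}\,.
\]
Interchanging $\pXY\leftrightarrow\tpXY$ (and $T\leftrightarrow\tT$) yields $F(r)\ge\tF(r)+2\epsilon\log\frac{\epsilon}{\cardY^2}$, and the two bounds together give $|F(r)-\tF(r)|\le -2\epsilon\log\frac{\epsilon}{\cardY^2}$.

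I expect the only real obstacle to be attainment of the maximum defining $F(r)$ (so that there is an optimizer to transport); for discrete $X$ this is the usual cardinality bound on the support of $T$, and in general the $\eta$-optimal-plus-limit workaround suffices, so it does not affect the final constant. Everything else — the Markov property of $\tT$, equality of the relevant marginals, the $\ell_1$ bound, and the chaining through \cref{thm:midiff} — is mechanical. As a minor remark, the hypothesis $r>0$ is not in fact needed, since $F(0)=\tF(0)=0$, but we state the result as given.
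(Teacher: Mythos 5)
Your proposal is correct and follows essentially the same route as the paper's own proof: transport the optimal channel $q(t\vert x)$ from one distribution to the other, bound $\left|p_{TY}-\tp_{TY}\right|_1\le\epsilon$ using $\sum_t q(t\vert x)=1$, invoke \cref{thm:midiff} with $\Z=T$, use feasibility of the transported variable to compare against the other curve, and symmetrize. Your explicit handling of attainment via $\eta$-optimal variables is a minor refinement the paper leaves implicit, but the argument is otherwise identical.
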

\begin{proof}

For any compression level $r>0$, let $T$ and $\tT$ be optimal bottleneck variables for $\pXY$ and $\tpXY$ respectively,
\begin{align*}
I(X;T) \le r \quad ; \quad & I_p(Y;T) = F(r)\\
I(X;\tT) \le r \quad ; \quad & I_{\tp}(Y; \tT) = \tF(r) \,.
\end{align*}

Let $T$ be defined by the stochastic map $q(t\vert x)$, and note that $p_T(t)=\int p(x) q(t\vert x) \; dx = \int \tp(x) q(t\vert x) \; dx = \tp_T(t)$. Consider two joint distributions over $T$ and $Y$,
\[
p_{TY}(t,y) = \int \pXY(x,y) q(t\vert x) \; dx \quad ; \quad \tp_{TY}(t,y) = \int \tpXY(x,y) q(t\vert x) \; dx \,,
\]
and observe that
\begin{align*}
|p_{TY}-\tp_{TY}|_1 &= \int dt \sum_y \left|\int \pXY(x,y) q(t\vert x) \; dx - \int \tpXY(x,y) q(t\vert x) \; dx\right| \\
&\le \int dt \sum_y \int q(t\vert x) \left|\pXY(x,y) - \tpXY(x,y) \right| dx\\
& \le  \sum_y \int \left|\pXY(x,y) - \tpXY(x,y) \right| dx \le \epsilon
\end{align*}
We now apply \cref{thm:midiff} with $Z=T$ and $Y=Y$ to give
$I_p(Y;T) + 2\epsilon \log\frac{\epsilon}{\cardY^2} \le I_{\tp}(Y;T)$. 
At the same time, by definition of the IB curve, it must be that $I_{\tp}(Y;T) \le \tF(r) = I_{\tp}(Y ; \tT)$. Combining the last two inequalities gives 
\begin{align}
I_p(Y;T) + 2\epsilon \log\frac{\epsilon}{\cardY^2}  \le I_{\tp}(Y ; \tT)\,.
\label{eq:ineq0a2}
\end{align}

The same argument can be repeated while exchanging the roles of $T$  and $\tT$ (thus defining the joint distributions $p_{\tT Y}$ and $\tp_{\tT Y}$, etc.). This gives the inequality
\begin{align}
I_{\tp}(Y;\tT) + 2\epsilon \log\frac{\epsilon}{\cardY^2}  \le I_p(Y;T)\,.
\label{eq:ineq0a3}
\end{align}
The theorem follows by combining \cref{eq:ineq0a2,eq:ineq0a3}, and using the relations $I_p(Y;T) = F(r)$, $I_{\tp}(Y;T) = \tF(r)$.
\end{proof}

It is important to emphasize that having a small $\ell_1$ distance between the joint distributions $\pXY$ and $\pXY$ 
does not imply that the conditional distributions $p_{Y\vert X=x}$ and $\tp_{Y\vert X=x}$  have to be close for all $x$.  In fact, the conditional distributions can be arbitrarily different for some $x$, as long as such $x$ do not have much probability under $p(x)$.

We use these theorems to demonstrate how, if the joint distribution of $X$ and $Y$ is $\epsilon$-close to having $Y$ be a deterministic function of $X$, then the three caveats discussed in the main text all apply in an approximate manner, meaning that can only be avoided up to order $\mathcal{O}(-\epsilon \log \epsilon)$.

\subsection{The three caveats when $Y$ is approximately a deterministic function of $X$}

\textbf{Issue 1: IB curve cannot be explored using the IB Lagrangian}

In the main text, we demonstrated that when $Y$ is a deterministic function of $X$, the single point $\langle H(Y), H(Y) \rangle$ on the information plane optimizes the IB Lagrangian for all $\beta \in [0,1]$.  We now consider the case where the joint distribution of $X$ and $Y$ is $\epsilon$-close to having $Y$ be a deterministic function of $X$. 
The next theorem shows that in this case, optimizers of the IB Lagrangian must still be close to $\langle H(Y), H(Y) \rangle$. Formally, for any fixed $\beta \in (0,1)$, the maximum possible distance from   $\langle H(Y), H(Y) \rangle$ scales as $\mathcal{O}(-\epsilon \log \epsilon)$, though the scaling constant increases as $\beta$ approaches 0 or 1. This implies that for small $\epsilon$, it will difficult to find solutions substantially different from $\langle H(Y), H(Y) \rangle$, 
as one would have to search using $\beta$ very close to 0 or very close to 1.

\begin{theorem}
Let $X$ be a random variable (continuous or discrete), and $Y$ a random variable with a finite set of outcomes $\sY$. 
Let $\tpXY$ be a joint distribution over $X$ and $Y$ under which $Y=f(X)$. Let $\pXY$ be a joint distribution over $X$ and $Y$ which has the same marginal over $X$ as $\tpXY$, $p(x) = \tp(x)$, and obeys $\left|\pXY - \tpXY\right|_{1}\le\epsilon\le\frac{1}{2}$. 

Then, for any $\beta\in(0,1)$ and $T$ which
maximizes the $\pXY$ IB Lagrangian $I_p(Y;T)-\beta I_p(X;T)$,
\begin{align}
-\frac{\gamma}{1-\beta} & \le I_p(X;T)-H(p(Y))\le\frac{\gamma}{\beta}\label{eq:thmres0}\\
-\frac{\gamma}{1-\beta} & \le I_p(Y;T)-H(p(Y)) \le 0 \label{eq:thmres1}
\end{align}
where $\gamma := -3\epsilon \log {\epsilon} + 5 \epsilon \log \cardY$.
\end{theorem}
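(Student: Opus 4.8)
The plan is to sandwich the optimal value of the $\pXY$ IB Lagrangian between a straightforward upper bound valid for every feasible $T$ and a lower bound obtained by evaluating the Lagrangian at the ``near-copy'' variable $\Tcopy := f(X)$, and then read off the four inequalities. For the upper side: for any $T$ obeying $Y-X-T$ under $\pXY$, the DPI gives $I_p(X;T)\ge I_p(Y;T)$, and $I_p(Y;T)\le H(p(Y))$, so
\[
I_p(Y;T)-\beta I_p(X;T)\;\le\;(1-\beta)I_p(Y;T)\;\le\;(1-\beta)H(p(Y))\,.
\]
I keep both of these separately: the first turns a lower bound on the Lagrangian into a lower bound on $I_p(Y;T)$, and substituting $I_p(Y;T)\le H(p(Y))$ into a lower bound on $I_p(Y;T)-\beta I_p(X;T)$ turns it into an upper bound on $I_p(X;T)$.

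For the lower side, take $\Tcopy := f(X)$, a deterministic function of $X$, hence feasible under $\pXY$. Since $p$ and $\tp$ share the marginal on $X$, the law of $f(X)$ under $p$ equals its law under $\tp$, which is $\tp(Y)$ because $Y=f(X)$ under $\tp$; hence $I_p(X;\Tcopy)=H_p(\Tcopy)=H(\tp(Y))$. For the prediction term, the joints $p_{\Tcopy Y}$ and $\tp_{\Tcopy Y}$ have a common marginal on $\Tcopy$ and, by the same $\ell_1$ data-processing argument used in the proof of \cref{thm:FRdiff}, satisfy $|p_{\Tcopy Y}-\tp_{\Tcopy Y}|_1\le|\pXY-\tpXY|_1\le\epsilon$; so \cref{thm:midiff} gives $I_p(Y;\Tcopy)\ge I_{\tp}(Y;\Tcopy)+2\epsilon\log\frac{\epsilon}{\cardY^2}=H(\tp(Y))+2\epsilon\log\frac{\epsilon}{\cardY^2}$, the last step because $\Tcopy=Y$ under $\tp$. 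Combining these, then using $|H(p(Y))-H(\tp(Y))|\le-\epsilon\log\frac{\epsilon}{\cardY}$ from \cite[Thm. 17.3.3]{cover_elements_2012} (valid since $|p_Y-\tp_Y|_1\le\epsilon$) and $0<1-\beta\le1$, yields
\[
I_p(Y;\Tcopy)-\beta I_p(X;\Tcopy)\;\ge\;(1-\beta)H(p(Y)) + \epsilon\log\tfrac{\epsilon}{\cardY} + 2\epsilon\log\tfrac{\epsilon}{\cardY^2}\;=\;(1-\beta)H(p(Y))-\gamma\,.
\]

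To conclude: because $T$ is a maximizer, $I_p(Y;T)-\beta I_p(X;T)\ge(1-\beta)H(p(Y))-\gamma$. Substituting $I_p(Y;T)\le H(p(Y))$ into the left-hand side and dividing by $\beta>0$ gives $I_p(X;T)-H(p(Y))\le\gamma/\beta$; using $I_p(Y;T)-\beta I_p(X;T)\le(1-\beta)I_p(Y;T)$ (from $I_p(X;T)\ge I_p(Y;T)$) and dividing by $1-\beta>0$ gives $I_p(Y;T)-H(p(Y))\ge-\gamma/(1-\beta)$, whence also $I_p(X;T)-H(p(Y))\ge I_p(Y;T)-H(p(Y))\ge-\gamma/(1-\beta)$; and $I_p(Y;T)-H(p(Y))\le0$ is immediate. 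These are precisely \cref{eq:thmres0,eq:thmres1}.

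The main obstacle is purely bookkeeping: verifying that $\ell_1$ closeness of $\pXY,\tpXY$ survives pushforward through $f$ (both for the marginal of $\Tcopy$ and for the joint of $\Tcopy$ and $Y$) and through the relevant stochastic maps, and checking that the two error terms $-2\epsilon\log\frac{\epsilon}{\cardY^2}$ and $-\epsilon\log\frac{\epsilon}{\cardY}$ sum to exactly $\gamma=-3\epsilon\log\epsilon+5\epsilon\log\cardY$. Everything reduces to the earlier theorems, the DPI, and the triangle inequality, so no genuinely new estimate is needed.
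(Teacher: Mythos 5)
Your proof is correct, and it reaches the paper's pivotal inequality $I_p(Y;T)-\beta I_p(X;T)\ge(1-\beta)H(p(Y))-\gamma$ by a genuinely different route. The paper first proves a general stability result for the whole IB curve (\cref{thm:FRdiff}, which itself requires a two-sided argument with both optimal bottleneck variables $T$ and $\tT$), uses it to lower-bound $F(H(p(Y)))$, and then invokes the fact that the optimizer's supporting line of slope $\beta$ must lie above every point of the curve to get the key inequality. You instead exhibit the single explicit witness $\Tcopy=f(X)$, compute $I_p(X;\Tcopy)=H(\tp(Y))$ exactly from the shared marginal $p(x)=\tp(x)$, and control $I_p(Y;\Tcopy)$ directly via \cref{thm:midiff} after checking that $\ell_1$ closeness survives the pushforward $(X,Y)\mapsto(f(X),Y)$; optimality of $T$ then gives the same lower bound on the Lagrangian, with the error terms $\epsilon\log\frac{\epsilon}{\cardY}+2\epsilon\log\frac{\epsilon}{\cardY^2}$ summing to exactly $-\gamma$. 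From that point on, the two arguments coincide: both combine the lower bound with $I_p(X;T)\ge I_p(Y;T)$ and $I_p(Y;T)\le H(p(Y))$ and rearrange. What your approach buys is self-containedness and economy — it bypasses \cref{thm:FRdiff} entirely and needs no appeal to concavity or to existence of optimizers on the curve at rate $H(p(Y))$; what the paper's approach buys is the reusable curve-stability bound $|F(r)-\tF(r)|\le-2\epsilon\log\frac{\epsilon}{\cardY^2}$, which it needs again for the second caveat. Both yield the identical constant $\gamma$.
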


\begin{proof}
First, note that if $|\pXY - \tpXY|_1 \le \epsilon$, then $|p_Y - \tp_Y|_1 \le \epsilon$ and (see proof of \cref{thm:midiff})
\begin{equation}
\left|H(p(Y))-H(\tp(Y))\right|\le-\epsilon\log\frac{\epsilon}{\cardY}\,.\label{eq:entdiff3}
\end{equation}

Let $F$ and $\tF$ indicate the IB curves for $\pXY$ and $\tpXY$, respectively, defined as in \cref{eq:constrainedproblem}.  Because $\tpXY$ is deterministic, the IB curve $\tF$ is the piecewise linear function $\tF(x) = \min\{ x, H(\tp(Y)) \}$.  Given \cref{eq:entdiff3}, this means that
\begin{align}
\tF(H(p(Y))) \ge H(p(Y)) + \epsilon \log \frac{\epsilon}{\cardY} \,. \label{eq:a002bound}
\end{align}
We now to bound $F(H(p(Y)))$ as 
\begin{align}
F(H(p(Y))) & \ge \tF(H(p(Y))) + 2\epsilon \log \frac{\epsilon}{\cardY^2}  \nonumber \\
& \ge H(p(Y)) + 2\epsilon \log \frac{\epsilon}{\cardY^2} + \epsilon \log \frac{\epsilon}{\cardY} \nonumber \\
& = H(p(Y)) -\gamma  \label{eq:a003bound} \,,
\end{align}
where in the first line we've used \cref{thm:FRdiff}, and in the second line we used \cref{eq:a002bound}.

Now consider $T$, the bottleneck variable that optimizes the $\pXY$ IB Lagrangian  for
some $\beta \in (0,1)$. By concavity of the IB curve $F$, all points on the IB
curve must fall below the line with slope $\beta$ that passes through
the point $\langle I_p(X;T),I_p(Y;T)\rangle$ on the information plane, including the point 
$\langle H(p(Y)), F(H(p(Y)) \rangle $. Formally, we write this condition as
\begin{align}
I_p({Y};T)+\beta\left(H(p(Y))-I_p(X;T)\right) & \ge F(H(p(Y)) 
\ge H(p(Y)) -\gamma \,,
\label{eq:thmineq0}
\end{align}
where the second inequality uses \cref{eq:a003bound}.  Note that by the DPI, $I_p(X;T)\ge I_p(Y;T)$. Combining with
\cref{eq:thmineq0} gives
\begin{align}
I_p(X;T)+\beta\left(H(p(Y))-I_p(X;T)\right) & \ge H(p(Y))-\gamma\nonumber \\
(\beta-1)\left(H(p(Y))-I_p(X;T)\right) & \ge-\gamma\nonumber \\
H(p(Y))-I_p(X;T) & \le\frac{\gamma}{1-\beta}\,.\label{eq:res0}
\end{align}
At the same time, by properties of mutual information, $H(p(Y))\ge I_p(Y;T)$. Combining
again with \cref{eq:thmineq0} gives 
\begin{align}
H(p(Y))+\beta\left(H(p(Y))-I_p(X;T)\right) & \ge H(p(Y))-\gamma\nonumber \\
H(p(Y))-I_p(X;T) & \ge-\frac{\gamma}{\beta}\label{eq:res2}
\end{align}
The result \cref{eq:thmres0} follows immediately from \cref{eq:res0} and \cref{eq:res2}
and rearranging.

We now derive \cref{eq:thmres1}. The upper bound arises from basic properties of mutual information, 
$H(p(Y))\ge I_p(Y;T)$. To derive the lower bound, we rewrite \cref{eq:thmineq0} as
\begin{align}
I_p(Y;T) & \ge H(p(Y)) -\gamma + \beta(I_p(X;T) - H(p(Y))) \\
& \ge H(p(Y)) - \gamma - \frac{\beta}{1-\beta} \gamma = H(p(Y)) - \frac{\gamma}{1-\beta} \,,
\end{align}
where in the second line we used \cref{eq:thmres0} and then combined terms.
\end{proof}

\textbf{Issue 2: All points on the IB curve are $\mathcal{O}(-\epsilon \log \epsilon)$ away from trivial solutions.}

In the main text, we show that when $Y$ is a deterministic function of $X$, there are trivial bottleneck variables which are optimal for all points of the IB curve. 
We now consider the case where the joint distribution of $X$ and $Y$ is $\epsilon$-close to having $Y$ be a deterministic function of $X$.
In this case,  by using \cref{thm:FRdiff}, it is straightforward to show that there are trivial bottleneck variables  that are $\mathcal{O}(-\epsilon \log \epsilon)$ away from being optimal along all points on the IB curve.

\begin{theorem}
Let $X$ be a random variable (continuous or discrete), and $Y$ a random variable with a finite set of outcomes $\sY$. 
Let $\tpXY$ be a joint distribution over $X$ and $Y$ under which $Y=f(X)$. Let $\pXY$ be a joint distribution over $X$ and $Y$ which has the same marginal over $X$ as $\tpXY$, $p(x) = \tp(x)$, and obeys $\left|\pXY - \tpXY\right|_{1}\le\epsilon\le\frac{1}{2}$. 

Then, for any $r>0$, there is some value of $\alpha \in [0,1]$ such that the bottleneck variable $T_\alpha$ defined by the conditional distribution $q_\alpha(T_\alpha = t\vert X = x) = \alpha \delta(t, f(x)) + (1-\alpha)\delta(t,0)$ satisfies
\[
I_p(T_\alpha;X) \le r \quad ; \quad I_p(T_\alpha;Y) \ge F(r) - \gamma \,,
\]
where $\gamma := -3\epsilon \log {\epsilon} + 5 \epsilon \log \cardY$.
\end{theorem}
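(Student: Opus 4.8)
\section*{Proof plan}

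The plan is to transport the optimality of the family $T_\alpha$ from the deterministic distribution $\tpXY$ to the nearby distribution $\pXY$, and to keep careful track of the error terms so as to land exactly on $\gamma$. The first observation I would make is that, since the channel $q_\alpha(t\vert x)=\alpha\,\delta(t,f(x))+(1-\alpha)\,\delta(t,0)$ (which is exactly the $T_\alpha$ of \cref{eq:TalphaDef}, i.e.\ $T_\alpha=B_\alpha\cdot f(X)$) depends on $x$ only through $f(x)$, and $\pXY$ and $\tpXY$ share the same $X$-marginal, the joint law of $(X,T_\alpha)$ is \emph{literally identical} under $p$ and $\tp$. Hence $I_p(X;T_\alpha)=I_{\tp}(X;T_\alpha)$ for every $\alpha$. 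Under $\tp$ we have $Y=f(X)$, so the argument of \cref{sec:piecewiselinearcurve} (DPI on both $Y-X-T_\alpha$ and $X-Y-T_\alpha$) gives $I_{\tp}(X;T_\alpha)=I_{\tp}(Y;T_\alpha)$, and this common value sweeps $[0,H(\tp(Y))]$ continuously as $\alpha$ runs from $0$ to $1$, by continuity of mutual information in the (affine-in-$\alpha$) channel $q_\alpha$. I would therefore choose $\alpha$ so that $I_p(X;T_\alpha)=I_{\tp}(X;T_\alpha)=\min\{r,H(\tp(Y))\}=\tF(r)\le r$ — taking $\alpha=1$ when $r\ge H(\tp(Y))$, and otherwise the intermediate $\alpha$ furnished by the intermediate value theorem. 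This already establishes the first required inequality, $I_p(T_\alpha;X)\le r$, and records that $I_{\tp}(Y;T_\alpha)=\tF(r)$.

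For the prediction inequality I would pass from $I_{\tp}(Y;T_\alpha)$ to $I_p(Y;T_\alpha)$ via \cref{thm:midiff} applied with $\Z=T_\alpha$: the joints $p_{T_\alpha Y}$ and $\tp_{T_\alpha Y}$ have the same $T_\alpha$-marginal (same $X$-marginal, same channel) and $\ell_1$ distance at most $\epsilon$ (the convexity/averaging bound $|p_{T_\alpha Y}-\tp_{T_\alpha Y}|_1\le\sum_{x,y}q_\alpha(t\vert x)\cdot|\pXY-\tpXY|\le|\pXY-\tpXY|_1$, identical to the one in the proof of \cref{thm:FRdiff}). This yields $I_p(Y;T_\alpha)\ge I_{\tp}(Y;T_\alpha)-\bigl(-2\epsilon\log(\epsilon/\cardY^{2})\bigr)=\tF(r)-\bigl(-2\epsilon\log(\epsilon/\cardY^{2})\bigr)$.

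The last step — comparing $\tF(r)$ with $F(r)$ — is the one to be careful about: invoking \cref{thm:FRdiff} here would cost a second $-2\epsilon\log(\epsilon/\cardY^{2})$ and overshoot the claimed $\gamma$. Instead I would use that $\tF$ is known exactly, $\tF(r)=\min\{r,H(\tp(Y))\}$, together with the elementary bounds $F(r)\le r$ (DPI under $p$) and $F(r)\le H(p(Y))$ (maximal MI under $p$), and the entropy-stability bound $|H(p(Y))-H(\tp(Y))|\le-\epsilon\log(\epsilon/\cardY)$ (Cover--Thomas Thm.\ 17.3.3 for $p_Y,\tp_Y$, as already used inside the proof of \cref{thm:midiff}). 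A short case split on whether $r\le H(\tp(Y))$ then gives $\tF(r)\ge F(r)+\epsilon\log(\epsilon/\cardY)$ in all cases. Combining the two displays, $I_p(Y;T_\alpha)\ge F(r)+\epsilon\log(\epsilon/\cardY)+2\epsilon\log(\epsilon/\cardY^{2})=F(r)-\gamma$, because $-2\epsilon\log(\epsilon/\cardY^{2})-\epsilon\log(\epsilon/\cardY)=-3\epsilon\log\epsilon+5\epsilon\log\cardY=\gamma$. The main obstacle is exactly this bookkeeping discipline: one must avoid chaining \cref{thm:midiff} with \cref{thm:FRdiff} and instead exploit the explicit piecewise-linear form of $\tF$ and the trivial DPI upper bound on $F$, so that only a single $-\epsilon\log(\epsilon/\cardY)$ (rather than a full $-2\epsilon\log(\epsilon/\cardY^{2})$) is paid for the $\tF\leftrightarrow F$ gap.
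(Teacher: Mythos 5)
Your proposal is correct, and it follows the same overall strategy as the paper's proof: pick $\alpha$ by a case split on $r$ versus $H(\tp(Y))$, use the fact that the joint law of $(X,T_\alpha)$ is identical under $p$ and $\tp$, transport $I(Y;T_\alpha)$ from $\tp$ to $p$ via \cref{thm:midiff}, and compare the deterministic curve $\tF(r)=\min\{r,H(\tp(Y))\}$ with $F(r)$. Where you genuinely diverge is in the last comparison, and your route is the more careful one. The paper's proof of the case $r\le H(\tp(Y))$ invokes \cref{thm:FRdiff} to get $\tF(r)\ge F(r)+2\epsilon\log(\epsilon/\cardY^2)$ and then displays this as a bound on $I_p(X;T_\alpha)$; as written, that display does not yet deliver the required bound on $I_p(Y;T_\alpha)$ (which under $p$ need not equal $I_p(X;T_\alpha)$, since $X-Y-T_\alpha$ fails), and repairing it by additionally applying \cref{thm:midiff} would cost a second $-2\epsilon\log(\epsilon/\cardY^{2})$ and overshoot $\gamma$ — exactly the double-counting trap you flag. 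Your substitute, namely the elementary bounds $F(r)\le r$ (DPI) and $F(r)\le H(p(Y))\le H(\tp(Y))-\epsilon\log(\epsilon/\cardY)$, yields $\tF(r)\ge F(r)+\epsilon\log(\epsilon/\cardY)$ in both cases at a cost of only $-\epsilon\log(\epsilon/\cardY)$, so that adding the single $-2\epsilon\log(\epsilon/\cardY^{2})$ from \cref{thm:midiff} lands exactly on $\gamma=-3\epsilon\log\epsilon+5\epsilon\log\cardY$. In short: same architecture, but your handling of the $\tF$-versus-$F$ gap is tighter and closes a bookkeeping gap in the paper's own first case.
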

\begin{proof}
First, note that if $|\pXY - \tpXY|_1 \le \epsilon$, then $|p_Y - \tp_Y|_1 \le \epsilon$ and
\begin{equation}
\left|H(p(Y))-H(\tp(Y))\right|\le-\epsilon\log\frac{\epsilon}{\cardY}\,.\label{eq:entdiff32}
\end{equation}
(See proof of \cref{thm:midiff}.)

Let $F(r)$  and $\tF(r)$ indicate the IB curve for $\pXY$ and $\tpXY$, respectively, as defined in \cref{eq:constrainedproblem}. By properties of mutual information and \cref{eq:entdiff32}, we have
\begin{align}
F(r) \le H(p(Y)) \le H(\tp(Y)) - \epsilon\log\frac{\epsilon}{\cardY} \,.
\label{eq:a006}
\end{align}

We now consider two cases. 

The first case is when $r \le H(\tp(Y))$. Recall that because $\tpXY$ is deterministic, the IB curve $\tF$ is the piecewise linear function $\tF(x) = \min\{ x, H(\tp(Y)) \}$. Thus, we choose an $\alpha$ so that $I_p(X; T_\alpha) = r$ and, using \cref{thm:FRdiff},
\[
I_p(X ; T_\alpha) = \tF(r) \ge F(r) + 2\epsilon \log \frac{\epsilon}{\cardY^2} \ge F(r) - \gamma \,.
\]

Second, when $r > H(\tp(Y))$, we choose $\alpha=1$ and note that $I_p(T_\alpha ; X) = H(\tp(Y)) \le r$. It can be verified (see for instance proof of \cref{thm:FRdiff}) that the joint distributions
$p_{T_\alpha Y}(t,y) = \int \pXY(x,y) q_\alpha(t\vert x) \; dx$ and $\tp_{T_\alpha Y}(t,y) = \int \tpXY(x,y) q_\alpha(t\vert x) \; dx$
satisfy the conditions of \cref{thm:midiff} with $Z=T_\alpha,Y=Y$. Applying that theorem gives 
\[
I_p(Y ; T_\alpha) \ge I_{\tp}(Y ; T_\alpha) + 2\epsilon \log \frac{\epsilon}{\cardY^2} = H(\tp(Y)) + 2\epsilon \log \frac{\epsilon}{\cardY^2} \ge H(p(Y)) - \gamma \,,
\]
where the last inequality uses \cref{eq:a006}. \end{proof}

\textbf{Issue 3: Prediction/compression trade-off can be most of order $\mathcal{O}(-\epsilon \log \epsilon)$}

In the main text, we consider a neural network consisting of $k$ layers, and show that when $Y$ is a deterministic function of $X$, there can be no strict prediction/compression trade-off between the different layers. 
We now consider the case where the joint distribution of $X$ and $Y$ is $\epsilon$-close to having $Y$ be a deterministic function of $X$.

As before, we let $X$ be a (continuous or discrete) random variable, and $Y$ a random variable with a finite set of outcomes $\sY$. 
Let $\tpXY$ be a joint distribution over $X$ and $Y$ under which $Y=f(X)$. Let $\pXY$ be a joint distribution over $X$ and $Y$ which has the same marginal over $X$ as $\tpXY$, $p(x) = \tp(x)$, and obeys $\left|\pXY - \tpXY\right|_{1}\le\epsilon\le\frac{1}{2}$. 

We assume that for each input $X$, our classifier outputs the prediction $\predsingleY = f(X)$  --- that is, it predicts according to the deterministic input-output mapping $f$.   For small $\epsilon$, this is an optimal  prediction strategy in terms of $P_e$, the probability of error. We write $P_e$ for this classifier as
\begin{align*}
P_e &=\int p(x) \left(1 - p_{Y\vert X=x}(f(x)\vert x)\right) dx \\
&=\frac{1}{2} \int p(x) \left(\left(1 - p_{Y\vert X=x}(f(x)\vert x)\right) + \sum_{y:y\ne f(x)} p_{Y\vert X=x}(y\vert x) \right) \\
&=\frac{1}{2} \int \sum_y p(x) \; |p_{Y\vert X=x}(y\vert x) - \tp_{Y\vert X=x}(y\vert x)|\; dx \\
&=\frac{1}{2} \left|\pXY - \tpXY\right|_{1} \le \frac{1}{2} \epsilon
\end{align*}
We now restate Fano's inequality (\cref{eq:fanos}) for the activity of the last layer, $\T{k}$, as
\begin{align}
H(Y\vert \T{k}) \le \mathcal{H}(P_e) + P_e \log (\cardY-1)  \le -2P_e \log \frac{2P_e}{\cardY} = -\epsilon \log \frac{\epsilon}{\cardY} \,,
\label{eq:fanores}
\end{align}
where the second inequality is found in \cite{zhang_estimating_2007}.

As before, we note that latter layers may compress input information more than earlier ones, in that $I(\T{k};X)$ may be smaller than $I(\T{1};X)$.  Moreover, when $Y$ is not exactly a deterministic function of $X$, then in principle a strict prediction/compression trade-off between different layers is possible (i.e., $I(T_i ; Y)$ can decrease for latter layers).  However, if the joint distribution of $X$ and $Y$ is $\epsilon$-close to having $Y=f(X)$, then the magnitude this trade-off is limited in size,
\[
I(\T{1};Y) - I(\T{k};Y) = H(Y|\T{k}) - H(Y|\T{1}) \le -\epsilon \log \frac{\epsilon}{\cardY} \,,
\]
where we use non-negativity of conditional entropy and \cref{eq:fanores}.

\end{document}